\documentclass{article}

\usepackage[preprint]{neurips_2026}

\usepackage[utf8]{inputenc} 
\usepackage[T1]{fontenc}    
\usepackage{hyperref}       
\usepackage{url}            
\usepackage{booktabs}       
\usepackage{amsfonts}       
\usepackage{nicefrac}       
\usepackage{microtype}      
\usepackage{xcolor}         

\usepackage{amsmath}
\usepackage{amssymb}
\usepackage{mathtools}
\usepackage{amsthm}

\usepackage{algorithmic}
\usepackage{algorithm}

\theoremstyle{plain}
\newtheorem{theorem}{Theorem}[section]

\newtheorem{lemma}[theorem]{Lemma}
\newtheorem{corollary}[theorem]{Corollary}
\theoremstyle{definition}

\theoremstyle{remark}

\DeclareFontFamily{U}{stix2bb}{}
\DeclareFontShape{U}{stix2bb}{m}{n} {<-> stix2-mathbb}{}
\NewDocumentCommand{\indicator}{}{\text{\usefont{U}{stix2bb}{m}{n}1}}

\title{Safe Bayesian Optimization\\with Counterfactual Policies}

\author{%
  Katherine Avery \\
  College of Computer Science\\
  University of Massachusetts Amherst\\
  Amherst, MA 01002 \\
  \texttt{kavery@cs.umass.edu} \\
  \And
  Bruno Castro da Silva \\
  College of Computer Science\\
  University of Massachusetts Amherst\\
  Amherst, MA 01002 \\
  \texttt{bsilva@cs.umass.edu}
  \And
  David Jensen \\
  College of Computer Science\\
  University of Massachusetts Amherst\\
  Amherst, MA 01002 \\
  \texttt{jensen@cs.umass.edu}
}

\begin{document}

\maketitle

\begin{abstract}
In many decision-making settings, new interventions are acceptable only if they do not reduce outcomes below some established threshold. For example, in clinical medicine, new treatments are often acceptable only if they do not worsen outcomes relative to an established standard of care. Safe Bayesian optimization maximizes an objective subject to safety constraints. In the setting that we consider here, safety is defined relative to a known baseline policy whose outcomes are counterfactual and therefore unobserved. Thus, the counterfactual outcomes of the baseline policy must be estimated and those (uncertain) estimates must be used to safely optimize the objective. We address this estimation problem by using conformal prediction to construct valid uncertainty intervals for counterfactual baseline outcomes, and we show how these intervals can be integrated into safe Bayesian optimization to ensure that constraint violations occur at or below a user-specified rate. We also show how to adapt these conformal estimates to different kinds of covariate shift. We provide a safety proof, experimental evidence, and a sensitivity analysis. 
\end{abstract}

\section{Introduction}

In many sequential decision-making problems, we seek to improve performance while ensuring that each action is safe. We may define safety in absolute terms, such as keeping blood pressure below a threshold, but we may instead define it relative to an established baseline. For example, when determining the effectiveness of a new drug for cancer treatment, we may not want patients’ quality of life to be substantially worse than it would have been under the pre-existing standard of care. This type of safety constraint is challenging since the patients' quality of life when taking the standard drug is not observed and can only be estimated. 

We frame this problem using the framework of safe Bayesian optimization (SafeOpt),\footnote{SafeOpt sometimes refers to the specific method proposed in \citeauthor{sui2015} \citeyearpar{sui2015} and sometimes refers to safe Bayesian optimization in general. In this work, it refers to the latter.} which optimizes some objective function $f(\mathbf{x})$, such as information gain, with respect to some constraint: 
\begin{equation} \label{eq:constrained-opt}
    \max_{\mathbf{x}\in\mathcal{X}} f(\mathbf{x}) \text{ s.t. } q(\mathbf{x}) \ge 0.
\end{equation}
SafeOpt optimizes the objective within a set of safe states $\mathcal{S}$ (where $\mathcal{X}\subseteq\mathcal{S}$) and expands $\mathcal{S}$ as more information is gathered. We focus specifically on SafeOpt with online conformal prediction \citep{zhang2024}, which provides distribution-free safety guarantees. Conformal SafeOpt ensures that the rate that $q(\mathbf{x}) \ge 0$ is violated over $T$ optimization steps is at or below $\alpha$, where $\alpha$ is user-specified. See Sec.~\ref{sec:background-cp}. 

Potential safe $\mathbf{x}$ values are typically determined based on the values of $q$ that have been seen so far. However, we are interested in safety constraints that contain counterfactuals. For example, we may want the value of a variable $Y$ under a treatment minus the counterfactual $Y$ under a standard-of-care treatment to be at least some value $-\omega$. Because $q$ depends on a counterfactual that is not observed, it is difficult to provide typical SafeOpt-style safety guarantees. Simply making a point estimate of the counterfactual can result in a violate rate above $\alpha$. 

Therefore, we use split conformal prediction to estimate the outcome of the counterfactual $\mathbf{x}$ under a standard of care \citep{lei2021}. Split conformal prediction creates interval estimates, rather than point estimates. There is a $100(1-\epsilon)$\% chance that the true value of the counterfactual falls within the interval, where $\epsilon$ is user-specified. $\epsilon$ can be leveraged to prove the formal safety guarantee required by SafeOpt methods (proof sketch in Sec.~\ref{sec:proof} and full proof in Appx.~\ref{appx:proofs}). 

We perform our primary experiments on two settings, both used in \citeauthor{zhang2024} \citeyearpar{zhang2024}. The first uses a chemical reaction simulator \citep{kang2019}, which gives us access to the true counterfactual outcomes. The second is the MovieLens dataset \citep{harper2015}. In both cases, we use the method in Sec.~\ref{sec:methodology} to estimate the constraint values for different constraints with counterfactuals. Our baselines include standard (non-safe) Bayesian optimization and an oracle with full knowledge of the counterfactual. We find that our method optimizes the objective while meeting the safety requirement that is provided in Sec.~\ref{sec:proof}. 

In summary, our contributions include the following. We create a method that ensures that the actions taken by safe Bayesian optimization do not perform $\omega$ worse than some known, safe policy for $100(1-\alpha)$\% of samples, where $\omega$ and $\alpha$ are user-specified. In Sec.~\ref{sec:methodology} and Appx.~\ref{appx:nonstationarity}, we show that the method is robust to different kinds of covariate shifts when re-weighting the calibration set. We prove that the method meets safety guarantees under minimal assumptions, and we bound the conformal coverage gap under assumption violations in Appx.~\ref{appx:coverage-gap}. Sec.~\ref{sec:results} and Appx.~\ref{appx:additional-results} provide experimental evidence that the safety constraint is met for $100(1-\alpha)$\% of samples and that the proposed method can adapt to different kinds of covariate shifts. We show this for constraints with one and two estimated terms. Finally, we provide a sensitivity analysis to examine how the method performs under different kinds and amounts of mis-specification.

\subsection{Motivating example}\label{sec:example}
Suppose you are running a clinical trial that tests a new drug for reducing systolic blood pressure. You give patients treatments one after the other for $T$ timesteps. One of the many safety constraints in your study compares the reduction in blood pressure under the new treatment $Y(\mathbf{x}_t)$ at time $t$ with the reduction under the standard-of-care policy $Y(\mathbf{x}^{soc}_t)$. You want $Y(\mathbf{x}_t)$ to be at least as good as $Y(\mathbf{x}^{soc}_t)$ with some tolerance level $\omega$. Because you cannot ensure that the constraint will never be violated, you specify a small violation rate $\alpha=0.01$. That is, the safety constraint can be violated $1\%$ of the time over $T$ timesteps. 

You first gather data from the standard-of-care policy. The data includes your treatment, any covariates, and the blood pressure outcome under the treatment. You split the data into a training set $\mathcal{D}_{train}$ and a calibration set $\mathcal{D}_{cal}$. $\mathcal{D}_{train}$ is used to get a point estimator for $Y(\mathbf{x}_{soc})$, and $\mathcal{D}_{cal}$ is used to create quantiles. You then start generating new treatments subject to the safety constraint using SafeOpt. For every new treatment that you generate, you estimate the upper bound of the counterfactual $Y(\mathbf{x}_{soc})$ and use that to compute your safety constraint at each timestep $q(\mathbf{x}_t)$.

Sec.~\ref{sec:methodology} lays out the methodology in more detail. We also describe how to adapt the method to different scenarios by re-weighting the calibration data to look like it came from the standard-of-care policy with the test covariates observed during SafeOpt. For example, if the training and calibration data are observational, the calibration data can be re-weighted to look like it came from the standard-of-care policy (Scenario 1). Alternatively, suppose there is covariate shift. The training and calibration data are collected at Hospital A. SafeOpt is run at Hospital B, which serves an older population. By observing the ages of the people in the study, we can re-weight the calibration data (Scenario 2).

\section{Related work}

\subsection{Safe Bayesian optimization under noisy constraint observations}\label{sec:related-bo}

Safe Bayesian optimization \citep{sui2015, berkenkamp2023, zhang2024} imposes safety constraints on standard Bayesian optimization, as in Eq.~\ref{eq:constrained-opt}. Our work considers safety constraints with estimated components, implying that the true value of the safety constraint is never actually observed. \cite{bergmann2020} considers SafeOpt with constraint observations that may be corrupted by a known amount of Gaussian measurement noise and reformulates the safety constraint to be probabilistic. In addition to the deterministic setting laid out in Sec. 5, \cite{zhang2024} considers the case when observations of the constraint have been corrupted by a known amount of measurement noise. They reformulate their probabilistic constraint to account for the noise. Other work in constrained Bayesian optimization reformulates hard constraints as probabilistic constraints under noisy constraint observations \citep{gelbart2014} or deals with the noise through better modeling of the constraint \citep{letham2019, wang2024}. Our work is distinct since we specifically consider safety constraints under estimation error for a variety of settings. 

\subsection{Constrained multi-armed bandits}

Safe Bayesian optimization is similar to constrained bandits. For example, \cite{amani2019} and \cite{moradipari2021} learn a policy while taking actions subject to a safety constraint by maintaining a safe set, which is similar to SafeOpt. Other kinds of constraints include budget constraints \citep{zhou2018,agrawal2016,agrawal2014,wu2015} and expectation constraints \citep{pacchiano2021}. In contrast to other work, our setting uses partially observed constraints. 

\section{Background}\label{sec:background}
In safe Bayesian optimization, we maximize an objective function $f(\mathbf{x})$ while obeying a safety constraint, as in Eq.~\ref{eq:constrained-opt}. We assume that an optimal solution to Eq.~\ref{eq:constrained-opt} exists, and we assume that the initial set of safe candidate solutions $\mathcal{S}_0$ is not empty. All decisions made by the system fall within the safe set, so $\mathcal{S}_0$ needs to be nonempty: $\mathcal{S}_0 \subseteq \{ \mathbf{x}\in\mathcal{X}: q(\mathbf{x}) \ge 0 \}$.

Both the safe set and the Bayesian posterior for the objective function are updated at each timestep $t$. We can do this by creating credible intervals for $f$ and $q$, which use Bayesian priors to estimate the ranges of possible values for $f$ and $q$ \citep{zhang2024, sui2015}. \citeauthor{zhang2024} \citeyearpar{zhang2024} defines the intervals using Gaussian processes with mean $\mu$ and standard deviation $\sigma$ as follows: 
\begin{equation}\label{eq:credible-f}
\begin{split}
    \mathcal{I}_f(\mathbf{x}|\mathcal{O}_t) =& [ f_l(\mathbf{x}|\mathcal{O}_t), f_u(\mathbf{x}|\mathcal{O}_t) ] \\
    =&[\mu_f(\mathbf{x}|\mathbf{X}_t, \mathbf{f}_t) - \beta_{t+1}\sigma_f(\mathbf{x}|\mathbf{X}_t, \mathbf{f}_t), 
    \mu_f(\mathbf{x}|\mathbf{X}_t, \mathbf{f}_t) + \beta_{t+1}\sigma_f(\mathbf{x}|\mathbf{X}_t, \mathbf{f}_t)]
\end{split}
\end{equation}
\begin{equation}\label{eq:credible-q}
\begin{split}
    \mathcal{I}_q(\mathbf{x}|\mathcal{O}_t) =& [ q_l(\mathbf{x}|\mathcal{O}_t), q_u(\mathbf{x}|\mathcal{O}_t) ] \\
    =&[\mu_q(\mathbf{x}|\mathbf{X}_t, \mathbf{q}_t) - \beta_{t+1}\sigma_q(\mathbf{x}|\mathbf{X}_t, \mathbf{q}_t), 
    \mu_q(\mathbf{x}|\mathbf{X}_t, \mathbf{q}_t) + \beta_{t+1}\sigma_q(\mathbf{x}|\mathbf{X}_t, \mathbf{q}_t)]
\end{split}
\end{equation}
where $\mathcal{O}_t=(\mathbf{X}_t, \mathbf{f}_t, \mathbf{q}_t)$ is the history up until time $t$. $\mathbf{X}_t$ is the decision history $(\mathbf{x}_1, ..., \mathbf{x}_t)$. $\mathbf{f}_t$ is the history of $f(\mathbf{x})$: $(f(\mathbf{x}_1), ..., f(\mathbf{x}_t))$. Similarly, $\mathbf{q}_t$ is the history of $q(\mathbf{x})$: $(q(\mathbf{x}_1), ..., q(\mathbf{x}_t))$. $\beta_{t+1}$ controls the width of the interval and is defined in Sec.~\ref{sec:background-cp}. 

We then choose an $\mathbf{x}_t$ that optimizes the objective function and/or expands the safe set by shrinking the credible interval of either $f$ or $q$. We do this by finding the action most decreases our uncertainty about $f$ or $q$ (whichever is more uncertain) while maximizing $f$ or expanding the safe set. $\mathcal{M}_{t+1}$ is the set of actions that maximizes $f$, and $\mathcal{G}_{t+1}$ is the set of actions that could expand the safe set:
\begin{equation}\label{eq:x_t1}
    \mathbf{x}_{t+1} = \underset{\mathbf{x}\in \mathcal{M}_{t+1}\cup\mathcal{G}_{t+1}}{\arg\max} \max\{ \sigma_f(\mathbf{x}|\mathcal{O}_t), \sigma_q(\mathbf{x}|\mathcal{O}_t) \}.
\end{equation}
The set of optimizers $\mathcal{M}_{t+1}$ includes all safe solutions that could perform better than the best known pessimistic estimate:
\begin{equation}\label{eq:optimizers}
    \mathcal{M}_{t+1} = \{ \mathbf{x}\in\mathcal{S}_{t+1} | f_u(\mathbf{x}|\mathcal{O}_t) \ge \max_{\mathbf{x}'\in\mathcal{S}_{t+1}} f_l(\mathbf{x}'|\mathcal{O}_t) \}.
\end{equation}
$\mathcal{G}_{t+1}$ is the set of solutions that could expand the safe set. That is, we define $\mathcal{G}_{t+1}$ such that our optimistic prediction of the next safe set $\Tilde{\mathcal{S}}_{t+2}$ is larger than $\mathcal{S}_{t+1}$. $\Tilde{\mathcal{S}}_{t+2}$ includes both the current history $\mathcal{O}_t$ and the optimistic observation $(\mathbf{x}, q_u(\mathbf{x}|\mathcal{O}_t))$: $ \Tilde{\mathcal{S}}_{t+2}(\mathbf{x}) = \mathcal{S}(\mathcal{O}_t \cup (\mathbf{x}, q_u(\mathbf{x}|\mathcal{O}_t))|\beta_{t+1} )$.
\begin{equation}\label{eq:expanders}
    \mathcal{G}_{t+1} = \{ \mathbf{x}\in\mathcal{S}_{t+1}: | \Tilde{\mathcal{S}}_{t+2}(\mathbf{x}) \backslash \mathcal{S}_{t+1} |>0 \}
\end{equation}
\subsection{Safe optimization with conformal prediction}\label{sec:background-cp}
\citeauthor{zhang2024} \citeyearpar{zhang2024} shows how to apply online conformal prediction \citep{gibbs2021, feldman2023} to safe Bayesian optimization. In short, the parameter $\beta_{t+1}$ in Eq.~\ref{eq:credible-f} and~\ref{eq:credible-q} is used to define the scale of the safe set so that it becomes more conservative when the safety constraint violation rate is too high. The violation rate is 
\begin{equation}\label{eq:violation-rate}
    \text{violation-rate}(T) := \frac{1}{T}\sum^T_{t=1} \text{err}_t \le \alpha,
\end{equation}
where $\alpha$ is the acceptable, user-defined violation rate, and the error at each $t$ is defined as
\begin{equation}
    \text{err}_t = \indicator(q(\mathbf{x}_t) < 0).
\end{equation}
$\beta$ is updated as follows:
\begin{equation}\label{eq:beta_t1}
\beta_{t+1} = \varphi(\Delta\alpha_{t+1}),
\end{equation}
where $\Delta\alpha_{t+1}$ is the excess violation rate, which measures the average number of errors above the allowed violation rate $\alpha_{algo}$. $\Delta\alpha_{t+1}$ is defined as follows and can be rewritten in terms of the violation rate:
\begin{align}
\begin{split}
    \Delta\alpha_{t+1} &= \Delta\alpha_t + \eta(\text{err}_t - \alpha_{algo}) 
    =\Delta\alpha_1 + \eta\cdot(\sum\limits^t_{t'=1}\text{err}_{t'}-\alpha_{algo}\cdot t)\\
    &=\Delta\alpha_1 + \eta\cdot t \cdot(\text{violation-rate}(t) - \alpha_{algo}),
\end{split}
\end{align}
where $\eta$ is an update rate hyperparameter; $\alpha_{algo}$ is the per-step violation rate; and $\alpha$ is the user-defined violation rate. $\alpha_{algo}$ is defined as
\begin{equation}\label{eq:alpha-algo}
    \alpha_{algo} = \frac{1}{T-1}(T\alpha - 1 - \frac{1}{\eta} + \frac{\Delta\alpha_1}{\eta}).
\end{equation}
Finally, $\varphi(\Delta\alpha_t)$ from Eq.~\ref{eq:beta_t1} can be defined in a variety of ways, though the following method used in \citeauthor{zhang2024} \citeyearpar{zhang2024} allows for fast adaptation of $\beta$:
\begin{equation}
\varphi(\Delta\alpha_t) = F^{-1}((\text{clip}(\Delta\alpha_t) + 1)/2),
\end{equation}
where $\text{clip}(\cdot)$ is a function that clips the endpoints of the input between 0 and 1, and $F^{-1}$ is the inverse cumulative distribution function of a standard Gaussian. $\beta_t$ increases slowly for low values of $\Delta\alpha_t$ and quickly as $\Delta\alpha_t$ approaches 1. See Fig. 3 of \citeauthor{zhang2024} \citeyearpar{zhang2024} for an illustration and \citeauthor{feldman2023} \citeyearpar{feldman2023} for a discussion of adaptation functions. 
\begin{algorithm}
  \caption{SafeOpt with Counterfactual Policy Constraints (SafeOpt-CPC)}
  \label{alg:safe-bocp}
  \begin{algorithmic}
    \STATE {\bfseries Input:} data $\mathcal{D}$, prior models for $f$ and $q$, initial safe set $\mathcal{S}_0$, initial decision $\mathbf{x}_0$, total iterations $T$, total observation iterations $T_0$, target rate $\alpha$, target rate $\epsilon$, update rate $\eta > 0$
    \STATE Split data $\mathcal{D}$ into $\mathcal{D}_{train}$ and $\mathcal{D}_{cal}$
    \STATE Train estimator $\hat{Y}$ on $\mathcal{D}_{train}$
    \IF{there may be a covariate shift between the calibration and test data}
        \STATE Observe test data for $T_0$ timesteps
    \ENDIF
    \FOR{$i=0 \text{ to } length(\mathcal{D}_{cal})$}
        \STATE Compute weights $w_i$ according to Eq.~\ref{eq:w_i} or Eq.~\ref{eq:w_i-covariate-shift} 
    \ENDFOR
    \STATE Calculate $\sum_{j=1}^n w_j$
    \STATE Calculate the scores on $\mathcal{D}_{cal}$ as in Eq.~\ref{eq:counterfactual-score} or Eq.~\ref{eq:score-lower}
    \STATE Calculate the violation rate $\alpha'$ according to Eq.~\ref{eq:alpha-prime}
    \FOR{$t=0$ {\bfseries to} $T$}
        \STATE Observe $f(\mathbf{x}_t)$ and $Y(\mathbf{x}_t)$
        \IF{$\mathbf{x}_t$ is the same as the standard(s) of care or \\$\mathbf{x}_t$ is the decision under some safe fallback policy (usually $\pi_{soc}$)} 
            \STATE $q(\mathbf{x_t}) = \omega$ 
        \ELSE
            \STATE Compute weight $w_{t}$ for current $\mathbf{x}_t$
            \STATE Compute the normalized weights $\Tilde{w}_i$ and $\Tilde{w}_{t}$ as in Eq.~\ref{eq:norm_w}
            \STATE Form the conformal upper or lower bounds according to Eq.~\ref{eq:upper-bound-counterfactual} or Eq.~\ref{eq:lower-bound-counterfactual}
            \STATE Form $q(\mathbf{x_t})$ according to Eq.~\ref{eq:q},~\ref{eq:q-lower}, or~\ref{eq:q-max}
        \ENDIF
        \STATE $\mathbf{x}_{t+1} \leftarrow$ \textsc{SafeOpt-Helper}() \COMMENT{Continue online conformal SafeOpt (Alg.~\ref{alg:safeopt-helper})}
    \ENDFOR
  \end{algorithmic}
\end{algorithm}

\section{Methodology}\label{sec:methodology}
Rather than creating constraints using the observed values of an outcome (or post-treatment covariate), we may wish to create constraints using counterfactuals. Specifically, we may want to ensure that the output of Bayesian optimization is not too much worse than a safe, standard-of-care policy $\pi_{soc}$ at every timestep $t$. In our setting, $\mathbf{x}$ is observed after every timestep. However, the outcome of the standard-of-care policy is not observed since that policy was not applied. 

To perform valid conformal prediction, we need to specify a probability of failure $\epsilon$ in addition to $\alpha$. For example, we might want to ensure that a constraint of the form $Y(\mathbf{x}_{t}) - Y(\mathbf{x}^{soc}_t)$ is at least $-\omega$ for 95\% of the samples ($\alpha=0.05$). However, if $\epsilon=0.01$, then there is a $1\%$ chance of an uncaught constraint violation. Therefore, we need to adjust $\alpha$ to be slightly lower to account for this potential violations. Suppose that $\mathbf{x}_t$ is the output of Bayesian optimization at time $t$ and $\mathbf{x}^{soc}_t$ is chosen by the standard of care policy $\pi_{soc}$.

\subsection{SafeOpt with counterfactual policy constraints (SafeOpt-CPC)}\label{sec:cpc}
$Y(\mathbf{x}_{t}) - Y(\mathbf{x}^{soc}_t) + \omega$ is not measurable due to the counterfactual $Y(\mathbf{x}^{soc}_t)$, so we need to replace $Y(\mathbf{x}^{soc}_t)$ with an estimate when it is not observed. To maintain our safety guarantee (Eq.~\ref{eq:violation-rate}), we cannot simply plug in a point prediction for $Y(\mathbf{x}^{soc}_t)$ without risking constraint violations. Rather, we will create a conformal prediction for the counterfactual.

When $\mathbf{x}^{soc}_t$ is chosen by Bayesian optimization, the safety constraint is met trivially:  
\begin{equation}\label{eq:q}
q(\mathbf{x}_t) =
\begin{cases} 
 Y(\mathbf{x}_t) -  \hat{U}^{soc}_{t}(\mathbf{x}^{soc}_t) + \omega & \text{if } \mathbf{x}_t \neq \mathbf{x}^{soc}_t \\
 \omega & \text{if } \mathbf{x}_t = \mathbf{x}^{soc}_t
\end{cases}
\end{equation}
where 
\begin{equation}\label{eq:ideal-conformal-guarantee}
    \mathbb{P}(Y(\mathbf{x}^{soc}_t) \leq \hat{U}^{soc}_{t}(\mathbf{x}^{soc}_t)) \ge 1 - \epsilon.
\end{equation}
We use the upper bound $\hat{U}^{soc}_{t}(\mathbf{x}^{soc}_t)$ in Eq.~\ref{eq:q} since in the worst case, the outcome under the standard of care is higher than the actual outcome. Because the outcome under the counterfactual policy is not observed at every step, we cannot apply online conformal prediction based just on the error signal as in \citeauthor{gibbs2021} \citeyearpar{gibbs2021} and \citeauthor{feldman2023} \citeyearpar{feldman2023}. Rather, we will use offline, split conformal prediction to find $\hat{U}^{soc}_{t}(\mathbf{x}^{soc}_t)$.

In split conformal prediction, there is a training set $\mathcal{D}_{train}$ that is used to train a point predictor and a calibration set $\mathcal{D}_{cal}$ of size $n$ that is used to create quantiles. The point prediction and quantiles can then be used to create an interval estimate for a new data point. In our setting, we will have some pre-collected data that we will split into a training set $\mathcal{D}_{train}$ and calibration set $\mathcal{D}_{cal}$, and we will observe our test data sequentially. 

To create prediction intervals, we take the quantile of the \textit{scores} for the calibration set and our new test point. Scores measure how unusual our new test point is compared to our training data. Because we are using an upper one-sided conformal prediction, we define the score $S$ (Eq.~\ref{eq:counterfactual-score}) as the difference between the true $Y(\mathbf{x}))$ and the predicted $\hat{Y}(\mathbf{x})$ without taking the absolute value. $\hat{Y}(\mathbf{x})$ is a prediction from the point predictor trained on $\mathcal{D}_{train}$. 
\begin{equation}\label{eq:counterfactual-score}
    S(\mathbf{x}, Y(\mathbf{x})) =  Y(\mathbf{x}) - \hat{Y}(\mathbf{x}).
\end{equation}
The scores for the calibration data are expected to be exchangeable with the score of the new test point we are predicting. If they are not exchangeable, we can use weighting to restore exchangeability. In our case the training and calibration data will ideally have been collected by our standard-of-care policy since we are predicting the outcome under the standard of care. If not, we will use weighting. 

$\hat{U}^{soc}_{t}(\mathbf{x}^{soc}_t)$ can be defined as
\begin{equation}\label{eq:upper-bound-counterfactual}
\begin{split}
    &\hat{U}^{soc}_{t}(\mathbf{x}^{soc}_t) = \hat{Y}(\mathbf{x}^{soc}_t) + Q^{soc}(1-\epsilon; \sum\limits_{i=1}^n \Tilde{w}_i \delta_{S_i^{soc}} + \Tilde{w}_{t} \delta_{+\infty})
\end{split}
\end{equation}
where $\delta_S$ is the point mass at score $S$; $\delta_{+\infty}$ is a point mass at a high score corresponding to the new data point at timestep $t$ of SafeOpt; $Q^{soc}$ denotes the quantile; and $\Tilde{w}$ are normalized weights \citep{tibshirani2019}:
\begin{equation}\label{eq:norm_w}
    \Tilde{w}_i = w_i/(\sum\limits_{j=1}^n w_j + w_{t}) \text{ and } \Tilde{w}_{t} = w_{t}/(\sum\limits_{j=1}^n w_j + w_{t}).
\end{equation}
Note that if the calibration data was collected by the standard-of-care policy, and there is no distribution shift between the calibration set and the new test point, then the weights in Eq.~\ref{eq:norm_w} are all set to 1. This is equivalent to unweighted conformal prediction.

If our calibration data did not come from the standard-of-care policy (Scenario 1, Sec.~\ref{sec:example}), we will use inverse propensity scoring. Let $\pi_{cal}(\mathbf{x}_i)$ be the probability of $\mathbf{x}_i$ under the policy that collected the calibration data. Then, propensity scoring uses weights of the form: 
\begin{equation}\label{eq:w_i}
   w_i = \pi_{soc}( \mathbf{x}_i)/\pi_{cal}( \mathbf{x}_i)
\end{equation}
Sometimes, there is a covariate shift between the calibration and test data, but no labeled data can be collected to create a new calibration set (Scenario 2, Sec.~\ref{sec:example}). If unlabeled data can be collected, it can be leveraged to learn weights \citep{tibshirani2019}:
\begin{equation}\label{eq:w_i-covariate-shift}
   w_i =\mathbb{P}_{test}( \mathbf{x}_i) / \mathbb{P}_{cal}( \mathbf{x}_i),
\end{equation}
where $\mathbb{P}_{test}( \mathbf{x}_i)$ is the probability of $\mathbf{x}_i$ under the test distribution, and $\mathbb{P}_{cal}(\mathbf{x}_i)$ is the probability of $\mathbf{x}_i$ under the calibration distribution.

In Appx.~\ref{appx:nonstationarity} we discuss how non-stationarity affects the weights $w_i$. We specifically examine two common covariate shift scenarios (changepoints and drift) and offer principled algorithms for both. When a changepoint in the covariates is detected, we can pause SafeOpt and re-calculate the weights before resuming. If drift is detected, we can create time-decaying weights. To avoid the calibration set becoming out-of-date, we take a standard-of-care action every other timestep. See Appx.~\ref{appx:nonstationarity} for details, algorithms, and experiments. 

\subsubsection{Extensions to different $q(\mathbf{x})$ constraints}
Eq.~\ref{eq:q} assumes that higher values of $Y$ are better, so we want $Y(\mathbf{x}_t)$ to be higher than $Y(\mathbf{x}^{soc}_t)$. If we instead want to minimize $Y$, then we can reformulate the constraint as: 
\begin{equation}\label{eq:q-lower}
q(\mathbf{x}_t) =
\begin{cases} 
  \hat{L}^{soc}_{t}(\mathbf{x}_t) - Y(\mathbf{x}_t) + \omega & \text{if } \mathbf{x}_t \neq \mathbf{x}^{soc}_t \\
 \omega & \text{if } \mathbf{x}_t = \mathbf{x}^{soc}_t
\end{cases}
\end{equation}
where $\hat{L}^{soc}_t$ is defined as
\begin{equation}\label{eq:lower-bound-counterfactual}
\begin{split}
    &\hat{L}^{soc}_{t}(\mathbf{x}_t) = \hat{Y}(\mathbf{x}^{soc}_t) - Q^{soc}(1-\epsilon; \sum\limits_{i=1}^n \Tilde{w}_i \delta_{S_i^{soc}} + \Tilde{w}_{t} \delta_{+\infty})
\end{split}
\end{equation}
\begin{equation}\label{eq:conformal-guarantee-lower}
    \mathbb{P}(Y(\mathbf{x}^{soc}_t) \geq \hat{L}^{soc}_{t}(\mathbf{x}_t)) \ge 1 - \epsilon
\end{equation} 
The corresponding score function would then be 
\begin{equation}\label{eq:score-lower}
    S(\mathbf{x}, Y(\mathbf{x})) = \hat{Y}(\mathbf{x}) - Y(\mathbf{x}).
\end{equation}
Second, we may wish to define $q(\mathbf{x})$ with more than one estimated component. Suppose we would like $Y(\mathbf{x}_t)$ to be higher than the values of $Y$ under two different standards of care: $Y(\mathbf{x}_t) - \max(Y(\mathbf{x}_{soc1}), Y(\mathbf{x}_{soc2})) + \omega$. Then, we could define the constraint function as
\begin{equation}\label{eq:q-max}
q(\mathbf{x}_t) =
\begin{cases} 
 Y(\mathbf{x}_t) - \max(\hat{U}^{soc1}_{t}(\mathbf{x}_t), \hat{U}^{soc2}_{t}(\mathbf{x}_t)) + \omega & \text{if } \mathbf{x}_t \neq \mathbf{x}_{soc1} \neq \mathbf{x}_{soc2} \\
 \omega & \text{if } \mathbf{x}_t = \mathbf{x}_{soc1} = \mathbf{x}_{soc2}
\end{cases}
\end{equation}
Note that both $\hat{U}^{soc1}_{t}(\mathbf{x}_t)$ and $\hat{U}^{soc2}_{t}(\mathbf{x}_t)$ in Eq.~\ref{eq:q-max} have conformal guarantees of the form in Eq.~\ref{eq:ideal-conformal-guarantee}. We can combine them into one conformal guarantee using a union bound:
\begin{equation}\label{eq:conformal-guarantee-multiple}
    \mathbb{P}(Y(\mathbf{x}^{soc,1}_t) \leq \hat{U}^{soc,1}_{t}(\mathbf{x}_t) \text{ and } ... \text{ and } Y(\mathbf{x}^{soc,m}_t) \leq \hat{U}^{soc,m}_{t}(\mathbf{x}_t)) \ge 1 - \epsilon_1 - ... - \epsilon_m.
\end{equation}
In this case $m=2$. We can then combine $\epsilon_1 + ... + \epsilon_m = \epsilon'$. 
\subsubsection{Adapting $\alpha$}\label{sec:adapting-alpha}
Because we are nesting conformal predictions, we will need to adjust the target violation rate $\alpha$ since the violation rate will be slightly higher than expected due to values of $Y(\mathbf{x}^{soc,i}_t)$ that fall above $\hat{U}^{soc,i}$.  Therefore, we define a practical maximum violation rate $\alpha'$:
\begin{equation}\label{eq:alpha-prime}
    \alpha' + \epsilon(1-\alpha') \le \alpha \equiv \alpha' \le (\alpha - \epsilon)/(1 - \epsilon).
\end{equation}
After computing $\alpha'$, Eq.~\ref{eq:alpha-algo} can be rewritten as $\alpha_{algo} = \frac{1}{T-1}(T\alpha' - 1 - \frac{1}{\eta} + \frac{\Delta\alpha_1}{\eta}).$

As an example, if we want to ensure that the safety constraint is met for 95\% of samples and $\epsilon = 0.01$, then $(0.05 - 0.01)/(1-0.01) \approx 0.04$. In this case, we would target a violation rate of $\alpha'\approx0.04$ since the stricter target rate would account for the uncaught violations. 

Alg.~\ref{alg:safe-bocp} adapts conformal SafeOpt to use counterfactual policy constraints (SafeOpt-CPC). 

\subsection{Safety proof sketch}\label{sec:proof}
The method in Sec.~\ref{sec:cpc} assumes \textit{weighted exchangeability} (\citeauthor{tibshirani2019} \citeyearpar{tibshirani2019}, Definition 1), which means that we assume that our weights are correctly specified for the test set. Weighted exchangeability also implies that when non-stationarity is present, it can be immediately detected and the weights can be recalculated. If weighted exchangeability is violated, there is a coverage gap bounded by Eq.~\ref{eq:gap-not-dependent} or Eq.~\ref{eq:gap-data-dependent} in Appx.~\ref{appx:coverage-gap}. 

Thm.~\ref{thm:safety-constraint} states the safety constraint from Eq.~\ref{eq:violation-rate} is met by the proposed SafeOpt method laid out in Sec.~\ref{sec:methodology}. Because we used conformal prediction, we can leverage $\epsilon$ to guarantee safety under weighted exchangeability. See Appx.~\ref{appx:proof} for the full proof.
\begin{theorem}\label{thm:safety-constraint}
    Under weighted exchangeability, Alg.~\ref{alg:safe-bocp} satisfies the safety constraint $\mathbb{P}(q(\mathbf{x_t}) \ge 0) \ge 1-\alpha$ for a user-defined violation rate $\alpha \in (0,1]$. 
\end{theorem}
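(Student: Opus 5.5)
The plan is to split a true violation into two separately controlled events. One is that the surrogate constraint computed by Alg.~\ref{alg:safe-bocp} is negative. The other is that the surrogate is nonnegative but the conformal upper bound failed to cover the counterfactual. Write $q^\star(\mathbf{x}_t) = Y(\mathbf{x}_t) - Y(\mathbf{x}^{soc}_t) + \omega$ for the true, unobserved constraint, and let $q(\mathbf{x}_t)$ be the surrogate of Eq.~\ref{eq:q}. Define the coverage event $C_t = \{Y(\mathbf{x}^{soc}_t) \le \hat{U}^{soc}_t(\mathbf{x}^{soc}_t)\}$.

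\textbf{Step 1: pointwise domination.} On $C_t$, we have $q(\mathbf{x}_t) \le q^\star(\mathbf{x}_t)$. Hence $\{q^\star(\mathbf{x}_t) < 0\} \subseteq \{q(\mathbf{x}_t) < 0\} \cup (\{q(\mathbf{x}_t)\ge 0\}\cap C_t^c)$. When $\mathbf{x}_t = \mathbf{x}^{soc}_t$, the true constraint equals $\omega \ge 0$, so the trivial case contributes no violation. The variants in Eq.~\ref{eq:q-lower} and Eq.~\ref{eq:q-max} are handled in the same way. For Eq.~\ref{eq:q-max}, the union bound of Eq.~\ref{eq:conformal-guarantee-multiple} replaces $\epsilon$ by $\epsilon'$.

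\textbf{Step 2: the conformal term.} Under weighted exchangeability, invoke the weighted split-conformal coverage result of \citeauthor{tibshirani2019} \citeyearpar{tibshirani2019} for the one-sided score of Eq.~\ref{eq:counterfactual-score}. With the normalized weights of Eq.~\ref{eq:norm_w} and the $\delta_{+\infty}$ mass, this gives Eq.~\ref{eq:ideal-conformal-guarantee}, i.e.\ $\mathbb{P}(C_t^c)\le\epsilon$. In Scenario 1 the weights are those of Eq.~\ref{eq:w_i}, and in Scenario 2 they are those of Eq.~\ref{eq:w_i-covariate-shift}. With unit weights this reduces to ordinary split conformal.

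\textbf{Step 3: the online term.} Apply the online conformal SafeOpt guarantee of \citeauthor{zhang2024} \citeyearpar{zhang2024}, run with $\alpha_{algo}$ computed from $\alpha'$. The $\beta_t$ update driven by $\text{err}_t=\indicator(q(\mathbf{x}_t)<0)$ is the standard Gibbs--Cand\`es argument: $\Delta\alpha_t$ stays bounded because $\beta=\infty$ forces a safe decision once clipping saturates. Summing the update then gives $\frac1T\sum_t \text{err}_t \le \alpha'$. This holds deterministically, so taking expectations gives an average-in-$t$ bound $\mathbb{P}(q(\mathbf{x}_t)<0)\le\alpha'$. I would state the theorem's per-$t$ probability in this averaged sense, with $t$ uniform on $\{1,\dots,T\}$.

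\textbf{Step 4: combine.} The aim is $\mathbb{P}(q^\star<0)\le \alpha' + \epsilon(1-\alpha')$. The first term bounds the event $q<0$. For the second term, I would bound $\mathbb{P}(q\ge 0, C_t^c)$ by $\epsilon\,\mathbb{P}(q\ge0)\le\epsilon(1-\alpha')$ by conditioning on the history and $\mathbf{x}_t$. This needs the calibration draw at time $t$ to be (weighted) exchangeable conditional on the event that $q\ge 0$, and that is the main obstacle. The surrogate $q(\mathbf{x}_t)$ itself depends on $\hat U$, so the events are not obviously independent. My first attempt is to condition on $\mathbf{x}_t$ and $Y(\mathbf{x}_t)$, noting that the decision $\mathbf{x}_t$ is chosen from $\mathcal{O}_{t-1}$ before the counterfactual is drawn. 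If that fails, I would settle for the cruder union bound $\alpha'+\epsilon$, which is still $\le\alpha$ when $\alpha'$ is chosen as $\alpha-\epsilon$.

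Either way, Eq.~\ref{eq:alpha-prime} then gives $\mathbb{P}(q^\star(\mathbf{x}_t)<0)\le\alpha$, which is the claim.
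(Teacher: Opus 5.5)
Your decomposition matches the paper's: weighted split conformal plus the union-bound lemma for $\epsilon'$, Thm.~2 of Zhang et al.\ for $\alpha'$, then $\alpha'+\epsilon'(1-\alpha')\le\alpha$. Two parts of your write-up are more careful than the paper: the explicit inclusion in Step~1, and your time-averaged reading of Zhang et al.'s deterministic bound. The paper instead asserts a per-$t$ bound $\mathbb{P}(q(\mathbf{x}_t)<0)<\alpha'$, which the cited theorem does not give.

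The step you flag is a genuine gap, and the paper does not close it either. It obtains the factor $(1-\alpha')$ only by saying a miscoverage that coincides with a detected violation should not be counted twice. In effect it assumes $\mathbb{P}(\{q<0\}\cap C_t^c)=\alpha'\epsilon'$ without justification. Your proposed repair fails for two reasons.
\begin{itemize}
\item Conditioning on $\mathbf{x}_t$ conditions on the calibration set. $\mathbf{x}_t$ is a function of $\mathcal{O}_{t-1}$, whose surrogate values are built from the same calibration scores; in the unweighted case every one of them is shifted by the same quantile. Split-conformal coverage holds only marginally over the calibration set, so it is lost once you condition on it.
\item Fix $\mathbf{x}_t$, $Y(\mathbf{x}_t)$ and $\mathbf{x}^{soc}_t$. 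Then $\{q\ge0\}=\{\hat U^{soc}_t\le Y(\mathbf{x}_t)+\omega\}$ and $C_t^c=\{Y(\mathbf{x}^{soc}_t)>\hat U^{soc}_t\}$ are both decreasing in $\hat U^{soc}_t$. Since the test outcome is independent of the calibration scores, Chebyshev's association inequality makes them positively correlated. Conditioning on $q\ge0$ can therefore only raise the miscoverage probability, and nothing keeps it below $\epsilon$. Equivalently, $\{q<0\}$ and $C_t^c$ are negatively correlated, so the overlap $\alpha'\epsilon'$ that the paper subtracts is overstated.
\end{itemize}

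What you can actually prove is your fallback, $\frac{1}{T}\sum_{t=1}^T\mathbb{P}(q^\star(\mathbf{x}_t)<0)\le\alpha'+\epsilon'$. This yields the claim only if $\alpha'\le\alpha-\epsilon'$. That is strictly smaller than the value $(\alpha-\epsilon)/(1-\epsilon)$ permitted by Eq.~\ref{eq:alpha-prime} and targeted by the paper. For example, $\alpha=0.1$ and $\epsilon=0.01$ give $\alpha'\approx0.0909$, so $\alpha'+\epsilon\approx0.1009>\alpha$. Your closing ``either way'' is therefore wrong. The fallback proves the theorem, in the time-averaged sense, for Alg.~\ref{alg:safe-bocp} run with $\alpha'=\alpha-\epsilon'$, not with the paper's choice.

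There is also a minor slip. The inequality $\epsilon\,\mathbb{P}(q\ge0)\le\epsilon(1-\alpha')$ is backwards, since $\mathbb{P}(q\ge0)\ge1-\alpha'$. Had the conditional bound held, you would finish differently: let $p$ be the time average of $\mathbb{P}(q(\mathbf{x}_t)<0)$, so $p\le\alpha'$, and apply the increasing map $p\mapsto p+\epsilon(1-p)$.
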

\textit{Proof sketch.} We perform split conformal prediction to estimate $Y_{soc}$. If the training and calibration sets come from the standard-of-care policy, then the calibration set is exchangeable with the new test point. If not, we can use weighted split conformal prediction, as described in Sec.~\ref{sec:cpc}. Therefore, $\mathbb{P}(Y(\mathbf{x}^{soc}_t) \leq \hat{U}^{soc}_{t}(\mathbf{x}^{soc}_t)) \ge 1 - \epsilon$ or $\mathbb{P}(Y(\mathbf{x}^{soc}_t) \ge \hat{L}^{soc}_{t}(\mathbf{x}^{soc}_t)) \ge 1 - \epsilon$ hold for each conformal estimate. If there is more than one conformal estimate, we can use a union bound to create one combined conformal guarantee, as in Eq.~\ref{eq:conformal-guarantee-multiple}. In particular, we can combine the $\epsilon$ values to create $\epsilon_1 + ... + \epsilon_m = \epsilon'$.

SafeOpt with online conformal prediction stays below the violation rate $\alpha$, as proven by Thm. 2 of \citeauthor{zhang2024} \citeyearpar{zhang2024}. 

For a moment, let's suppose that each $Y(\mathbf{x}^{soc,i}_t)$ is observed. Therefore, $\mathbb{P}(q(\mathbf{x}_t) < 0 )) < \alpha'$ because of online conformal prediction. Because each $Y(\mathbf{x}^{soc,i}_t)$ is observed, each $Y(\mathbf{x}^{soc,i}_t)$ and our estimate of $Y(\mathbf{x}^{soc,i}_t)$ are the same, so the probability of a miscoverage event from offline prediction is 0. The probability of event $\mathbb{P}(q(\mathbf{x}_t) < 0 )) < \alpha'$ or an offline miscoverage event occurring is $\alpha' + 0(1-\alpha')$.

Suppose we replace each $Y(\mathbf{x}^{soc,i}_t)$ with a conformal prediction with coverage $1-\epsilon'$. We can replace $\alpha' + 0(1-\alpha')$ with $\alpha' + \epsilon'(1-\alpha')$, which is less than or equal to $\alpha$. Therefore, $\alpha$ includes all violations associated with the online conformal guarantee ($\alpha'$) plus all miscoverage events associated with the offline guarantee ($\epsilon'$) minus miscoverage events that have already been covered by the online guarantee ($\alpha'\epsilon'$). $\qed$

The purpose of Algs.~\ref{alg:changepoint} and~\ref{alg:drift} is to calculate better weights under non-stationarity. Because Thm.~\ref{thm:safety-constraint} assumes weighted exchangeability, Algs.~\ref{alg:changepoint} and~\ref{alg:drift} also satisfy the safety constraint. See Corollary~\ref{cor:safety-constraint-nonstationarity}.

\section{Experiments}\label{sec:experiments}

We evaluate the proposed method on data generated by a chemical reaction simulator \citep{kang2019} and the MovieLens dataset \citep{harper2015}. The advantage of the chemical reaction simulator is that we have full access to the counterfactuals. For MovieLens, we assume that the counterfactual rating is the same as the user's rating for the standard-of-care recommendation.

For both the chemical reaction simulator and the MovieLens dataset, we compare SafeOpt-CPC to Bayesian optimization without constraints, as well as to SafeOpt with full knowledge of the counterfactual (an oracle). We consider both the case where the calibration and training sets use the standard of care policy, as well as the case where the calibration and training sets use a different policy. That policy is random in the case of the simulator and is the data collection policy in the case of MovieLens. We examine constraints of the form in Eq.~\ref{eq:q} and Eq.~\ref{eq:q-max} for both datasets.

For the chemical reaction experiments, $f(\mathbf{x})$ is the percent yield, while $Y$ is the percent selectivity minus the percent yield (see \cite{kang2019}). The standard-of-care policy randomly recommends actions within the range Temperature$=[100.0,120.0]$ and pH$=[0.7,0.9]$. The second standard of care policy $\pi_{soc2}$ (when relevant) recommends Temperature$=110.0$ and pH=$0.8$.

For the MovieLens experiment, suppose we recommend movies to specific users chosen from a pool. Each movie has a popularity score of $Z$, and each user gives each movie a rating $Y$. 
We wish to recommend less popular movies while receiving sufficiently high ratings. Therefore, $f(\mathbf{x})=-Z(\mathbf{x})$, and the safety constraint $q(\mathbf{x})$ is of the form Eq.~\ref{eq:q} or Eq.~\ref{eq:q-max}. The standard-of-care policy randomly recommends one of the top five most popular movies in the training set. The second standard of care policy $\pi_{soc2}$ (when relevant) recommends the top movie. The pool of test users consists of the five users that (1) have rated all the movies possibly recommended by the standard of care policy and (2) have the most rated movies in common. The pool of movies to recommend consists of the movies these users have in common. This ensures that we have a valid rating for the counterfactual and that the popularity score does not influence which movies are available to recommend at each step. To add realism, the training and calibration sets consist of movies from the pool and users not from the pool, so there is a covariate shift between calibration and testing. We re-weight the calibration set to account for this shift. 

Finally, we randomly generate synthetic datasets, as described in Appx.~\ref{appx:synthetic-data}. The purpose of the synthetic data is to generate a variety of $f(\mathbf{x})$ and $q(\mathbf{x})$ functions in case there are any idiosyncrasies in the MovieLens or chemical reaction settings. In short we generate two variables $Z$ and $Y$ using different Gaussian processes. $Y$ is affected by a context variable. $f(\mathbf{x}) = Z(\mathbf{x})$ and $q(\mathbf{x})$ is defined in Eq.~\ref{eq:q}. We run a sensitivity analysis examining the effects of the quality of $\hat{Y}(\mathbf{x}_t)$, the amount of noise, and weight misspecification on the results. 

See Appx.~\ref{appx:experimental-details} for further experimental details and Appx.~\ref{appx:nonstationarity} for experiments in nonstationary settings. 

\section{Results}\label{sec:results}
As shown in the bar plots in Fig.~\ref{fig:main-results}, SafeOpt-CPC identifies potential violations below the rate $\alpha$. Of these potential violations caught by the algorithm, only a few end up being true violations. The oracle baseline gets the true violation rate closer to the rate $\alpha$ since it is not as conservative. Standard Bayesian optimization with no safety constraints has a potential and true violation rate far above the allowed violation rate $\alpha$. 

The $\omega$ tolerance graphs in Fig.~\ref{fig:main-results} show that as the tolerance term increases, SafeOpt-CPC falls back to the standard-of-care less often. $\omega$ is the tolerance term in Eq.~\ref{eq:q} and~\ref{eq:q-max}. Note that for the MovieLens experiments, we can see some sharp drops near the boundary of each star rating. The maximum $f(\mathbf{x})$ value found increases as the tolerance term increases. 

Fig.~\ref{fig:main-results} and Appx.~\ref{appx:additional-movielens} show that these trends are the same when the training and calibration set are collected from a different policy. The top row of Fig.~\ref{fig:main-results} shows the results when the training and calibration sets are collected from the standard-of-care policy. The second row shows the results when the sets are from the data collection policy. In both cases, $q(\mathbf{x})$ is defined as in Eq.~\ref{eq:q}. 

Fig.~\ref{fig:main-results} and Appx.~\ref{appx:additional-reaction} show that the above trends hold for the constraints defined in Eq.~\ref{eq:q} and Eq.~\ref{eq:q-max}. The third row of Fig.~\ref{fig:main-results} corresponds to Eq.~\ref{eq:q}, and the fourth row corresponds to Eq.~\ref{eq:q-max}. 

Overall, these results support the claim that SafeOpt-CPC stays below the allowed violation rate under different data collection policies for training and calibration and under different numbers of estimated components in $q(\mathbf{x})$. 

In Appx.~\ref{appx:sensitivity-analysis} we perform a sensitivity analysis and show that the quantiles in Eq.~\ref{eq:upper-bound-counterfactual} become larger under noise in the data and under high-bias and high-variance estimates of $Y$. We also show that the coverage gap increases with the weights' inaccuracy for covariate shift and non-stationarity. Finally, we show that coverage holds under non-stationarity with well-specified weights. 

\begin{figure}
\centering
\includegraphics[width=\columnwidth]{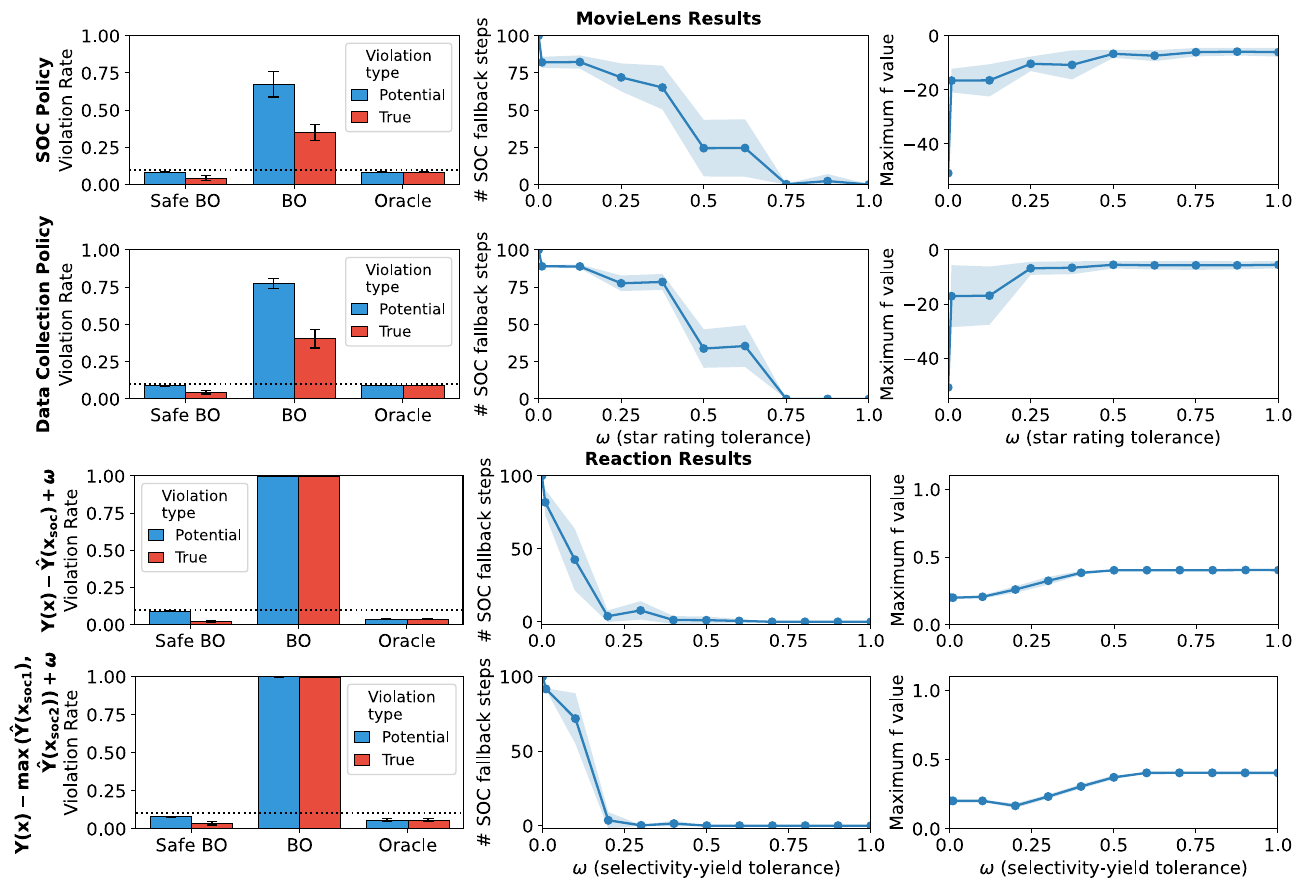} 
\caption{\textbf{MovieLens and chemical reaction results.} (Left) The number of violations and perceived violations for the oracle, standard BO, and SafeOpt-CPC ($\omega=0.01$). (Middle) The number of timesteps that SafeOpt-CPC reverts to the standard-of-care policy for different values of $\omega$ (tolerance levels). (Right) The maximum safe $f(\mathbf{x})$ value for different values of $\omega$. Note that the ratings for the MovieLens results are normalized between 0 and 1, so the range of $\omega$ is also [0,1]. 
}
\label{fig:main-results}
\end{figure}


\section{Discussion and conclusion}\label{sec:discussion}
In our results, we found that SafeOpt-CPC consistently keeps violations below the specified rate, despite the estimated components in the constraint. We also find that the number of potential or perceived violations is generally higher than the number of true violations. As expected, the number of standard of care fallback steps decreases as the tolerance increases, and the optimized value of $f(\mathbf{x})$ increases as the tolerance increases. 

Recall that this work focuses on safety constraints that include estimates of counterfactuals. That said, the split conformal prediction method laid out in Sec.~\ref{sec:methodology} can be trivially extended to any constraint that requires an estimated quantity, rather than an observed one. 

A limitation of our work is that if the quantiles are too large and/or the tolerance term is very low, the method may revert back to a given standard of care policy, which can be overly conservative. Further, if there are too many estimated quantities in the constraint, then $\alpha'$ can become very low, allowing for very few violations. For example, if $\alpha=0.05$ and there are four estimated quantities each at $\epsilon=0.01$, then $\alpha'$ decreases to only $0.01$.


This work targets applications with strict safety constraints, which is important for high-risk applications of machine learning. We would like to emphasize that controlling the violation rate alone is not sufficient for most high-risk applications, such as medical applications. Though we discuss only the violation rate, this work can be easily extended to other constraint functions. \citeauthor{feldman2023} \citeyearpar{feldman2023} discusses constraints functions other than violation rate, as well as how to control for multiple constraints simultaneously. We leave the extension of these methods to Bayesian optimization to future work. 

\bibliographystyle{plainnat}
\bibliography{neurips_2026}

@article{berkenkamp2023,
  title={Bayesian optimization with safety constraints: safe and automatic parameter tuning in robotics},
  author={Berkenkamp, Felix and Krause, Andreas and Schoellig, Angela P},
  journal={Machine Learning},
  volume={112},
  number={10},
  pages={3713--3747},
  year={2023},
  publisher={Springer}
}

@article{feldman2023,
title = "Achieving Risk Control in Online Learning Settings",
author = "Shai Feldman and Liran Ringel and Stephen Bates and Yaniv Romano",
journal = {Transactions on Machine Learning Research},
year = "2023",
month = nov,
day = "1",
volume = "2023",
}

@inproceedings{sui2015,
  title={Safe exploration for optimization with Gaussian processes},
  author={Sui, Yanan and Gotovos, Alkis and Burdick, Joel and Krause, Andreas},
  booktitle={International Conference on Machine Learning},
  pages={997--1005},
  year={2015},
  organization={PMLR}
}

@article{lei2021,
  title={Conformal inference of counterfactuals and individual treatment effects},
  author={Lei, Lihua and Cand{\`e}s, Emmanuel J},
  journal={Journal of the Royal Statistical Society Series B: Statistical Methodology},
  volume={83},
  number={5},
  pages={911--938},
  year={2021},
  publisher={Oxford University Press}
}

@article{gibbs2021,
  title={Adaptive conformal inference under distribution shift},
  author={Gibbs, Isaac and Candes, Emmanuel},
  journal={Advances in Neural Information Processing Systems},
  volume={34},
  pages={1660--1672},
  year={2021}
}

@article{zhang2024,
  title={Bayesian optimization with formal safety guarantees via online conformal prediction},
  author={Zhang, Yunchuan and Park, Sangwoo and Simeone, Osvaldo},
  journal={IEEE Journal of Selected Topics in Signal Processing},
  volume={19},
  number={1},
  pages={45--59},
  year={2024},
  publisher={IEEE}
}

@article{barber2023,
  title={Conformal prediction beyond exchangeability},
  author={Barber, Rina Foygel and Candes, Emmanuel J and Ramdas, Aaditya and Tibshirani, Ryan J},
  journal={The Annals of Statistics},
  volume={51},
  number={2},
  pages={816--845},
  year={2023},
  publisher={Institute of Mathematical Statistics}
}

@article{tibshirani2019,
  title={Conformal prediction under covariate shift},
  author={Tibshirani, Ryan J and Foygel Barber, Rina and Candes, Emmanuel and Ramdas, Aaditya},
  journal={Advances in neural information processing systems},
  volume={32},
  year={2019}
}

@article{harper2015,
author = {Harper, F. Maxwell and Konstan, Joseph A.},
title = {The MovieLens Datasets: History and Context},
year = {2015},
issue_date = {January 2016},
publisher = {Association for Computing Machinery},
address = {New York, NY, USA},
volume = {5},
number = {4},
issn = {2160-6455},
journal = {ACM Trans. Interact. Intell. Syst.},
month = dec,
articleno = {19},
numpages = {19}
}

@article{kang2019,
  title={Glucose to 5-hydroxymethylfurfural: origin of site-selectivity resolved by machine learning based reaction sampling},
  author={Kang, Pei-Lin and Shang, Cheng and Liu, Zhi-Pan},
  journal={Journal of the American Chemical Society},
  volume={141},
  number={51},
  pages={20525--20536},
  year={2019},
  publisher={ACS Publications}
}

@inproceedings{bergmann2020,
  title={Safe Bayesian optimization under unknown constraints},
  author={Bergmann, Daniel and Graichen, Knut},
  booktitle={2020 59th IEEE Conference on Decision and Control (CDC)},
  pages={3592--3597},
  year={2020},
  organization={IEEE}
}

@article{letham2019,
  title={Constrained Bayesian optimization with noisy experiments},
  author={Letham, Benjamin and Karrer, Brian and Ottoni, Guilherme and Bakshy, Eytan},
  year={2019}
}

@article{gelbart2014,
  title={Bayesian optimization with unknown constraints},
  author={Gelbart, Michael A and Snoek, Jasper and Adams, Ryan P},
  journal={arXiv preprint arXiv:1403.5607},
  year={2014}
}

@inproceedings{wang2024,
  title={Constrained Bayesian optimization under partial observations: Balanced improvements and provable convergence},
  author={Wang, Shengbo and Li, Ke},
  booktitle={Proceedings of the AAAI Conference on Artificial Intelligence},
  volume={38},
  number={14},
  pages={15607--15615},
  year={2024}
}

@article{amani2019,
  title={Linear stochastic bandits under safety constraints},
  author={Amani, Sanae and Alizadeh, Mahnoosh and Thrampoulidis, Christos},
  journal={Advances in Neural Information Processing Systems},
  volume={32},
  year={2019}
}

@inproceedings{zhou2018,
  title={Budget-constrained multi-armed bandits with multiple plays},
  author={Zhou, Datong and Tomlin, Claire},
  booktitle={Proceedings of the AAAI Conference on Artificial Intelligence},
  volume={32},
  number={1},
  year={2018}
}

@article{moradipari2021,
  title={Safe linear thompson sampling with side information},
  author={Moradipari, Ahmadreza and Amani, Sanae and Alizadeh, Mahnoosh and Thrampoulidis, Christos},
  journal={IEEE Transactions on Signal Processing},
  volume={69},
  pages={3755--3767},
  year={2021},
  publisher={IEEE}
}

@InProceedings{pacchiano2021,
  title = 	 { Stochastic Bandits with Linear Constraints },
  author =       {Pacchiano, Aldo and Ghavamzadeh, Mohammad and Bartlett, Peter and Jiang, Heinrich},
  booktitle = 	 {Proceedings of The 24th International Conference on Artificial Intelligence and Statistics},
  pages = 	 {2827--2835},
  year = 	 {2021},
  editor = 	 {Banerjee, Arindam and Fukumizu, Kenji},
  volume = 	 {130},
  series = 	 {Proceedings of Machine Learning Research},
  month = 	 {13--15 Apr},
  publisher =    {PMLR},
}

@article{agrawal2016,
  title={Linear contextual bandits with knapsacks},
  author={Agrawal, Shipra and Devanur, Nikhil},
  journal={Advances in neural information processing systems},
  volume={29},
  year={2016}
}

@inproceedings{agrawal2014,
  title={Bandits with concave rewards and convex knapsacks},
  author={Agrawal, Shipra and Devanur, Nikhil R},
  booktitle={Proceedings of the fifteenth ACM conference on Economics and computation},
  pages={989--1006},
  year={2014}
}

@article{wu2015,
  title={Algorithms with logarithmic or sublinear regret for constrained contextual bandits},
  author={Wu, Huasen and Srikant, Rayadurgam and Liu, Xin and Jiang, Chong},
  journal={Advances in Neural Information Processing Systems},
  volume={28},
  year={2015}
}


\appendix

\begin{algorithm}
  \caption{SafeOpt-Helper}
  \label{alg:safeopt-helper}
  \begin{algorithmic}
    \STATE {\bfseries Input:} $\mathbf{x}_t$, $q(\mathbf{x}_t)$, $f(\mathbf{x}_t)$, $T$, $t$, $\alpha$, $\epsilon$, $\eta$, $\mathcal{O}_{t-1}$, $\Delta \alpha_t$, current models for $f$ and $q$, current safe set $\mathcal{S}$
    \STATE Calculate $\text{err}_t = \indicator(q(\mathbf{x_t}) < 0)$ 
    \IF{$t==0$}
        \STATE $\mathcal{O}_t = \{ \mathbf{x}_t, f(\mathbf{x}_t), q(\mathbf{x}_t) \}$
    \ELSE
        \STATE Update $\mathcal{O}_t = \mathcal{O}_{t-1} \cup \{ \mathbf{x}_t, f(\mathbf{x}_t), q(\mathbf{x}_t) \}$
    \ENDIF
    \STATE Update the models (e.g., Gaussian processes) for $f$ and $q$ with $\mathcal{O}_t$
    \STATE Update $\Delta \alpha_{t+1} = \Delta \alpha_t + \eta(\text{err}_t - \alpha_{algo})$
    \STATE Update $\beta_{t+1} = \varphi(\Delta \alpha_{t+1})$
    \STATE Create credible intervals $\mathcal{I}_f(\mathbf{x}|\mathcal{O}_t)$ and $\mathcal{I}_q(\mathbf{x}|\mathcal{O}_t)$
    \STATE Update safe set $\mathcal{S}_{t+1}$
    \STATE Create set of optimizers $\mathcal{M}_{t+1}$ and expanders $\mathcal{G}_{t+1}$
    \IF{$t==T$}
        \STATE $\mathbf{x}_{t+1} = \arg\max_{\mathbf{x}\in S_{t+1}} f_l(\mathbf{x}|\mathcal{O}_T)$
    \ELSE
        \STATE Form decision $\mathbf{x}_{t+1}$ as in Eq.~\ref{eq:x_t1}
    \ENDIF
  \end{algorithmic}
\end{algorithm}

\section{Proofs}\label{appx:proofs}
\subsection{Proof of Theorem~\ref{thm:safety-constraint}}\label{appx:proof}
For this proof, we assume \textit{weighted exchangeability}, defined in Definition 1 of \cite{tibshirani2019}. Note that if the calibration and test sets are exchangeable, then this assumption is not required. 

Second, we will need to prove that Eq.~\ref{eq:conformal-guarantee-multiple} has valid conformal coverage:
\begin{lemma}\label{lemma:union-bound}
    Given that we have $m$ different conformal guarantees of the form \\$\mathbb{P}(Y(\mathbf{x}^{soc,i}_t) \leq \hat{U}^{soc,i}_{t}(\mathbf{x}_t)) \ge 1 - \epsilon_i$, then \\$\mathbb{P}(Y(\mathbf{x}^{soc,1}_t) \leq \hat{U}^{soc,1}_{t}(\mathbf{x}_t) \text{ and } ... \text{ and } Y(\mathbf{x}^{soc,m}_t) \leq \hat{U}^{soc,m}_{t}(\mathbf{x}_t)) \ge 1 - \epsilon_1 - ... - \epsilon_m$ (Eq.~\ref{eq:conformal-guarantee-multiple}). 
\end{lemma}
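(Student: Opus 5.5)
This is a direct application of Boole's inequality (the union bound) to the complementary miscoverage events. No exchangeability or independence between the $m$ conformal predictions is needed: the union bound holds for arbitrary dependence.

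For each $i \in \{1,\dots,m\}$, define the coverage event $A_i = \{Y(\mathbf{x}^{soc,i}_t) \leq \hat{U}^{soc,i}_{t}(\mathbf{x}_t)\}$ and its complement, the miscoverage event $A_i^c = \{Y(\mathbf{x}^{soc,i}_t) > \hat{U}^{soc,i}_{t}(\mathbf{x}_t)\}$. All events live on a common probability space: the joint randomness of the calibration sets, the test point at step $t$, and the counterfactual outcomes. By hypothesis, $\mathbb{P}(A_i) \ge 1-\epsilon_i$, hence $\mathbb{P}(A_i^c) \le \epsilon_i$.

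The key step is De Morgan's law, which turns the joint coverage event into the complement of a union:
\begin{equation*}
\mathbb{P}\Bigl(\bigcap_{i=1}^m A_i\Bigr) = 1 - \mathbb{P}\Bigl(\bigcup_{i=1}^m A_i^c\Bigr).
\end{equation*}
Subadditivity of probability then gives $\mathbb{P}(\bigcup_i A_i^c) \le \sum_i \mathbb{P}(A_i^c) \le \sum_i \epsilon_i$. Substituting yields
\begin{equation*}
\mathbb{P}\Bigl(\bigcap_{i=1}^m A_i\Bigr) \ge 1-\epsilon_1-\cdots-\epsilon_m,
\end{equation*}
which is exactly Eq.~\ref{eq:conformal-guarantee-multiple}. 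If desired, one can formalize subadditivity by induction on $m$ using $\mathbb{P}(B\cup C)\le \mathbb{P}(B)+\mathbb{P}(C)$, but this is standard.

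There is no real obstacle. The one point needing care is that each marginal guarantee, e.g. from weighted split conformal prediction in Eq.~\ref{eq:upper-bound-counterfactual}, must be a statement about the same joint probability measure, so that the events $A_i$ can be intersected. I will note this explicitly, and note that when $\sum_i \epsilon_i \ge 1$ the bound is vacuous but still true.
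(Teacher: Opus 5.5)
Your proposal is correct and matches the paper's proof: both pass to the complement of a union of miscoverage events, apply the union bound, and then use the marginal bounds $\mathbb{P}(A_i^c)\le\epsilon_i$. Your remark that all the events must live on a common probability space is a sensible clarification that the paper leaves implicit.
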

We first rewrite the left hand side of our conformal guarantee:

$\mathbb{P}(Y(\mathbf{x}^{soc,1}_t) \le \hat{U}^{soc,1}_{t}(\mathbf{x}_t) \text{ and } ... \text{ and } Y(\mathbf{x}^{soc,m}_t) \leq \hat{U}^{soc,m}_{t}(\mathbf{x}_t)) \\= 1 - \mathbb{P}(Y(\mathbf{x}^{soc,1}_t) > \hat{U}^{soc,1}_{t}(\mathbf{x}_t) \text{ or } ... \text{ or } Y(\mathbf{x}^{soc,m}_t) > \hat{U}^{soc,m}_{t}(\mathbf{x}_t))$  

By the union bound, 

$\mathbb{P}(Y(\mathbf{x}^{soc,1}_t) > \hat{U}^{soc,1}_{t}(\mathbf{x}_t) \text{ or } ... \text{ or } Y(\mathbf{x}^{soc,m}_t) > \hat{U}^{soc,m}_{t}(\mathbf{x}_t)) \\ \le \mathbb{P}(Y(\mathbf{x}^{soc,1}_t) > \hat{U}^{soc,1}_{t}(\mathbf{x}_t)) + ... + \mathbb{P}(Y(\mathbf{x}^{soc,m}_t) > \hat{U}^{soc,m}_{t}(\mathbf{x}_t))$

By definition,

$\mathbb{P}(Y(\mathbf{x}^{soc,1}_t) > \hat{U}^{soc,1}_{t}(\mathbf{x}_t)) \leq \epsilon_1$, ..., $\mathbb{P}(Y(\mathbf{x}^{soc,m}_t) > \hat{U}^{soc,m}_{t}(\mathbf{x}_t)) \leq \epsilon_m$

Therefore,

$\mathbb{P}(Y(\mathbf{x}^{soc,1}_t) \le \hat{U}^{soc,1}_{t}(\mathbf{x}_t) \text{ and } ... \text{ and } Y(\mathbf{x}^{soc,m}_t) \leq \hat{U}^{soc,m}_{t}(\mathbf{x}_t)) \\ \ge 1 - (\epsilon_1 + ... + \epsilon_m) = 1 - \epsilon_1 - ... - \epsilon_m$\qed \\ \\

\textbf{Theorem~\ref{thm:safety-constraint}} \textit{Under weighted exchangeability, Alg.~\ref{alg:safe-bocp} satisfies the safety constraint $\mathbb{P}(q(\mathbf{x_t} \ge 0)) \ge 1-\alpha$ for a user-defined violation rate $\alpha \in (0,1]$. } \\

We are performing split conformal prediction to estimate $Y_{soc}$. If the training and calibration sets come from the standard-of-care policy, then the calibration set is exchangeable with the new test point. 

If not, we can use weighted split conformal prediction, as described in Sec.~\ref{sec:cpc}. See Sec. 2.2 of \citeauthor{tibshirani2019} \citeyearpar{tibshirani2019}, which shows that weighted split conformal prediction satisfies Eq.~\ref{eq:upper-bound-counterfactual} or Eq.~\ref{eq:lower-bound-counterfactual} if the weights are well-specified. \citeauthor{barber2023} \citeyearpar{barber2023}, Thm. 2 proves the coverage gap in Eq.~\ref{eq:gap-not-dependent} and extends Thm. 2 to data-dependent weights in Sec. 4.5 (Eq.~\ref{eq:gap-data-dependent}). 

Therefore, $\mathbb{P}(Y(\mathbf{x}^{soc}_t) \leq \hat{U}^{soc}_{t}(\mathbf{x}^{soc}_t)) \ge 1 - \epsilon$ or $\mathbb{P}(Y(\mathbf{x}^{soc}_t) \ge \hat{L}^{soc}_{t}(\mathbf{x}^{soc}_t)) \ge 1 - \epsilon$ hold for each conformal estimate. If there is more than one conformal estimate, we can use a union bound to create one combined conformal guarantee, as proven in Lemma~\ref{lemma:union-bound}. In particular, we can combine the $\epsilon$ values to create $\epsilon_1 + ... + \epsilon_m = \epsilon'$.

SafeOpt with online conformal prediction stays below the violation rate $\alpha$, as proven by Thm. 2 of \citeauthor{zhang2024} \citeyearpar{zhang2024}. 

Suppose that each $Y(\mathbf{x}^{soc,i}_t)$ is observed. Therefore, $\mathbb{P}(q(\mathbf{x}_t) < 0 )) < \alpha'$ because of online conformal prediction. Because each $Y(\mathbf{x}^{soc,i}_t)$ is observed, each $Y(\mathbf{x}^{soc,i}_t)$ and our estimate of $Y(\mathbf{x}^{soc,i}_t)$ are the same, so the probability of a miscoverage event from offline prediction is 0. The probability of event $\mathbb{P}(q(\mathbf{x}_t) < 0 )) < \alpha'$ or an offline miscoverage event occurring is $\alpha' + 0(1-\alpha') = \alpha'$.

Suppose we replace each $Y(\mathbf{x}^{soc,i}_t)$ with a conformal prediction with coverage $1-\epsilon'$. We can replace $\alpha' + 0(1-\alpha')$ with $\alpha' + \epsilon'(1-\alpha')$. If a violation is detected by online conformal prediction, we do not need to count a second violation from miscoverage in offline conformal prediction, so we add $\epsilon'(1-\alpha')$ instead of $\epsilon'$ so as not to count the intersection between online and offline conformal violations twice. 

$\alpha' + \epsilon'(1-\alpha')$ is less than or equal to $\alpha$. Therefore, $\alpha$ includes all violations associated with the online conformal guarantee ($\alpha'$) plus all miscoverage events associated with the offline guarantee ($\epsilon'$) minus miscoverage events that have already been covered by the online guarantee ($\alpha'\epsilon'$). $\qed$ \\

The purpose of Algs.~\ref{alg:changepoint} and~\ref{alg:drift} is to calculate better weights under non-stationarity. Because Thm.~\ref{thm:safety-constraint} assumes weighted exchangeability, Algs.~\ref{alg:changepoint} and~\ref{alg:drift} also satisfy the safety constraint.
\begin{corollary}\label{cor:safety-constraint-nonstationarity}
    Under weighted exchangeability, Algs.~\ref{alg:changepoint} and~\ref{alg:drift} satisfy the safety constraint $\mathbb{P}(q(\mathbf{x_t} \ge 0)) \ge 1-\alpha$ for a user-defined violation rate $\alpha \in (0,1]$.  
\end{corollary}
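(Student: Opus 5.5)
The corollary should follow by reducing it to Theorem~\ref{thm:safety-constraint}. I would show that Algs.~\ref{alg:changepoint} and~\ref{alg:drift} differ from Alg.~\ref{alg:safe-bocp} only in how they compute the calibration weights $w_i$ and the test weight $w_t$. Everything else is unchanged: the scores, the weighted quantile in Eq.~\ref{eq:upper-bound-counterfactual} or Eq.~\ref{eq:lower-bound-counterfactual}, the choice of $\alpha'$ from Eq.~\ref{eq:alpha-prime}, and the online conformal update of $\beta_{t+1}$ inside \textsc{SafeOpt-Helper}. Theorem~\ref{thm:safety-constraint} needs only two ingredients: the per-step offline coverage $\mathbb{P}(Y(\mathbf{x}^{soc}_t)\le \hat U^{soc}_t)\ge 1-\epsilon$ (or its lower or multi-estimate analogue via Lemma~\ref{lemma:union-bound}), and Thm.~2 of \citeauthor{zhang2024} \citeyearpar{zhang2024} for the online part. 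So it suffices to re-establish the first ingredient at every step $t$ under the weights these algorithms use.

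\emph{Changepoint algorithm.} I would split the horizon into the segments between detected changepoints. By the weighted exchangeability hypothesis, which as noted in Sec.~\ref{sec:proof} includes immediate detection of the shift, the recomputed weights on each segment are the correct likelihood ratios $\mathbb{P}_{test}/\mathbb{P}_{cal}$ for that segment's test distribution. Sec.~2.2 of \citeauthor{tibshirani2019} \citeyearpar{tibshirani2019} then gives coverage $1-\epsilon$ for every test point in the segment. Pausing SafeOpt while the weights are recomputed takes no actions, so it adds no terms to the violation-rate sum.

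\emph{Drift algorithm.} Here the time-decaying weights are, under the hypothesis, well specified for the current test distribution at each $t$. Weighted exchangeability therefore holds pointwise in $t$, and the same citation gives per-step coverage. The standard-of-care actions taken every other step give $q(\mathbf{x}_t)=\omega\ge 0$ by Eq.~\ref{eq:q}, so those steps contribute no violations.

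With per-step coverage $1-\epsilon'$ in hand, I would repeat the final accounting of Theorem~\ref{thm:safety-constraint} verbatim: the online guarantee contributes at most $\alpha'$, and offline miscoverage not already counted contributes $\epsilon'(1-\alpha')$, for a total of $\alpha'+\epsilon'(1-\alpha')\le\alpha$. I expect the main obstacle to be the drift case. The recursion in Thm.~2 of \citeauthor{zhang2024} \citeyearpar{zhang2024} is deterministic in the error sequence, so it is unaffected by the interleaved standard-of-care steps. The real difficulty is whether data-dependent, time-decaying weights still qualify as ``well-specified'' under weighted exchangeability. My plan is to absorb this into the hypothesis as stated. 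If needed, I would also point to the data-dependent coverage gap of Eq.~\ref{eq:gap-data-dependent}, which vanishes when the weights are correct.
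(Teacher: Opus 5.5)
Your proposal is correct and is the same reduction the paper uses. The paper's whole argument is that Algs.~\ref{alg:changepoint} and~\ref{alg:drift} only serve to compute better weights under non-stationarity, so, since Theorem~\ref{thm:safety-constraint} already assumes weighted exchangeability, its conclusion carries over. Your changepoint-by-segment argument, the observation that the interleaved standard-of-care steps in Alg.~\ref{alg:drift} have $q(\mathbf{x}_t)=\omega\ge 0$, and your remark that the online recursion is unaffected make explicit what the paper leaves implicit, and you take the final $\alpha'+\epsilon'(1-\alpha')\le\alpha$ accounting from the theorem exactly as the paper does.
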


\section{Coverage gaps and analysis}\label{appx:coverage-gap}
When the weighted exchangeability assumption is not met, we have a coverage gap: 
\begin{equation}
    P(Y(\mathbf{x}^{soc}_t) \leq \hat{U}^{soc}_{t}(\mathbf{x}^{soc}_t)) \ge 1 - \epsilon_t + gap_t
\end{equation}
The gap is measured in terms of total variation distance (TV). If the weights are not data-dependent, then the gap can be bounded by 
\begin{equation}\label{eq:gap-not-dependent}
    gap \le \sum\limits_{i=1}^n \Tilde{w_i} \cdot TV(R(\mathcal{D}_{cal+1}), R(\mathcal{D}_{cal+1}^i))
\end{equation}
\citep{barber2023}. $\mathcal{D}_{cal+1}$ is the calibration set with the new test point appended to the end: $(\mathcal{D}_{cal}, \mathbf{x}_t)$. $\mathcal{D}_{cal+1}^i$ is $\mathcal{D}_{cal+1}$ with point $i$ swapped for point $n+1$. $R(\mathcal{D}_{cal+1})$ is the vector of residuals for $\mathcal{D}_{cal+1}$. Note that the residuals and the score are not necessarily the same.  If the weights are data-dependent,
\begin{equation}\label{eq:gap-data-dependent}
    gap \le \mathbb{E}[ \sum\limits_{i=1}^n \Tilde{w_i} \cdot TV(R(\mathcal{D}_{cal+1}), R(\mathcal{D}_{cal+1}^i) | w_1,...,w_n) ]
\end{equation}
Data dependent weights are computed from the observed data, rather than being fixed in advance or known from the true distribution. The weight of one point can depend on the weights of the other points, so we condition on the weights and take the expectation \citep{barber2023}.\footnote{Because we are doing split conformal prediction, we ignore tags.} In Eq.~\ref{eq:w_i}, $P_{cal}$ is usually estimated from the sample, so the weights would be data-dependent. Alternatively, if a practitioner accounts for covariate drift by deciding the decay rate for the weights in advance, then the weights would not be data-dependent.

Note that we cannot calculate the coverage gap before running the algorithm because $Y$ is not observed until after an action is taken. Further, the gap is very expensive to calculate because we need to measure the TV distance for every new test point swapped with every point $i$ in the calibration set. In theory, you could estimate the gap based on the unlabeled data used to calculate the weights, though you would need to assume no shifts in $P(Y | \mathbf{x})$ or $P(\mathbf{x} | Y)$. 
\begin{equation}\label{eq:gap-guarantee}
\begin{split}
    &\epsilon_t' = \epsilon + gap_t \\
    &P(Y(\mathbf{x}^{soc}_t) \leq \hat{U}^{soc}_{t}(\mathbf{x}^{soc}_t)) \ge 1 - \epsilon_t'
\end{split}
\end{equation}
The overall chance that $Y(\mathbf{x}_{soc})$ falls outside of the interval between timesteps 1 and $T_0$ is therefore $ \epsilon' = (\sum_t^{T_0} \epsilon_t') / T_0$. 

\begin{algorithm}
  \caption{SafeOpt with counterfactual policy constraints (changepoint)}
  \label{alg:changepoint}
  \begin{algorithmic}
    \STATE {\bfseries Input:} data $\mathcal{D}$, prior models for $f$ and $q$, initial safe set $\mathcal{S}_0$, initial decision $\mathbf{x}_0$, total iterations $T$, observation iterations $T_0$, target overall violation rate $\alpha$, target counterfactual policy violation rate $\epsilon$, update rate $\eta > 0$
    \STATE Split data $\mathcal{D}$ into $\mathcal{D}_{train}$ and $\mathcal{D}_{cal}$
    \STATE $n \leftarrow length(\mathcal{D}_{cal})$
    \STATE Train estimator $\hat{Y}$ on $\mathcal{D}_{train}$
    \IF{there may be a covariate shift between the calibration and test data}
        \STATE Observe test data for $T_0$ timesteps
    \ENDIF
    \FOR{i=0 to n}
        \STATE Form weights $w_i$ according to Eq.~\ref{eq:w_i} or Eq.~\ref{eq:w_i-covariate-shift} 
    \ENDFOR
    \STATE Calculate $\sum_{j=1}^n w_j$
    \STATE Calculate the scores on $\mathcal{D}_{cal}$ as in Eq.~\ref{eq:counterfactual-score} or Eq.~\ref{eq:score-lower}
    \STATE Calculate violation rate $\alpha'$ according to Eq.~\ref{eq:alpha-prime}
    \FOR{$t=0$ {\bfseries to} $T$}
        \STATE Observe $f(\mathbf{x}_t)$ and $Y(\mathbf{x_t})$
        \IF{$\mathbf{x}_t$ is the same as the standard(s) of care \\or $\mathbf{x}_t$ is the decision under the safe fallback policy}
            \STATE $q(\mathbf{x_t}) = \omega$ 
        \ELSE
            \STATE Form weight $w_{t}$ for current $\mathbf{x}_t$
            \STATE Compute the normalized weight $\Tilde{w}_i$ as in Eq.~\ref{eq:norm_w}
            \STATE Compute the normalized weight $\Tilde{w}_{t}$ as in Eq.~\ref{eq:norm_w}
            \STATE Form the conformal upper or lower bounds according to Eq.~\ref{eq:upper-bound-counterfactual} or Eq.~\ref{eq:lower-bound-counterfactual}
            \STATE Form $q(\mathbf{x_t})$ according to Eq.~\ref{eq:q},~\ref{eq:q-lower}, or~\ref{eq:q-max}
        \ENDIF
        \IF{a changepoint is detected}
            \STATE Observe test data for $T_0$ timesteps
            \FOR{i=0 to n}
                \STATE Form weights $w_i$ according to Eq.~\ref{eq:w_i-covariate-shift}
            \ENDFOR
            \STATE Calculate $\sum_{j=1}^n w_j$
        \ENDIF
        \STATE $\mathbf{x}_{t+1} \leftarrow$ \sc{SafeOpt-Helper}() 
    \ENDFOR
  \end{algorithmic}
\end{algorithm}

\begin{algorithm}
  \caption{SafeOpt with counterfactual policy constraints (drift)}
  \label{alg:drift}
  \begin{algorithmic}
    \STATE {\bfseries Input:} data $\mathcal{D}$, prior models for $f$ and $q$, initial safe set $\mathcal{S}_0$, initial decision $\mathbf{x}_0$, total iterations $T$, observation iterations $T_0$, target overall violation rate $\alpha$, target counterfactual policy violation rate $\epsilon$, update rate $\eta > 0$
    \STATE Split data $\mathcal{D}$ into $\mathcal{D}_{train}$ and $\mathcal{D}_{cal}$
    \STATE $n \leftarrow length(\mathcal{D}_{cal})$
    \STATE Train estimator $\hat{Y}$ on $\mathcal{D}_{train}$
    \IF{there may be a covariate shift between the calibration and test data}
        \STATE Observe test data for $T_0$ timesteps
    \ENDIF
    \FOR{i=0 to n}
        \STATE Form time-decaying weights $w_i$ (for example, according to Eq.~\ref{eq:drift})
    \ENDFOR
    \STATE Calculate $\sum_{j=1}^n w_j$
    \STATE Calculate the scores on $\mathcal{D}_{cal}$ as in Eq.~\ref{eq:counterfactual-score} or Eq.~\ref{eq:score-lower}
    \STATE Calculate violation rate $\alpha'$ according to Eq.~\ref{eq:alpha-prime}
    \FOR{$t=0$ {\bfseries to} $T$}
        \IF{$t \% 2 == 0$}
            \STATE Observe $f(\mathbf{x}_t)$ and $Y(\mathbf{x_t})$
            \IF{$\mathbf{x}_t$ is the same as the standard(s) of care \\or $\mathbf{x}_t$ is the decision under the safe fallback policy}
                \STATE $q(\mathbf{x_t}) = \omega$ 
            \ELSE
                \STATE Form weight $w_{t}$ for current $\mathbf{x}_t$
                \STATE Compute the normalized weight $\Tilde{w}_i$ as in Eq.~\ref{eq:norm_w}
                \STATE Compute the normalized weight $\Tilde{w}_{t}$ as in Eq.~\ref{eq:norm_w}
                \STATE Form the conformal upper or lower bounds according to Eq.~\ref{eq:upper-bound-counterfactual} or Eq.~\ref{eq:lower-bound-counterfactual}
                \STATE Form $q(\mathbf{x_t})$ according to Eq.~\ref{eq:q},~\ref{eq:q-lower}, or~\ref{eq:q-max}
            \ENDIF
            \STATE $\mathbf{x}_{t+1} \leftarrow \pi_{soc}$
            \STATE Add $\mathbf{x}_{t+1}$ to $\mathcal{D}_{cal}$
        \ELSE
            \STATE $\mathbf{x}_{t+1} \leftarrow$ \sc{SafeOpt-Helper}()
        \ENDIF
    \ENDFOR
  \end{algorithmic}
\end{algorithm}

\section{Non-stationarity extension}\label{appx:nonstationarity}

SafeOpt with online conformal prediction \citep{zhang2024} does not assume stationarity, and we would like to preserve this property as much as possible for safety constraints with estimated components. Not accounting for non-stationarity when it is present will result in a larger coverage gap (see Sec.~\ref{appx:coverage-gap}). 

Unfortunately, we need to know or be able to detect the kind of non-stationarity in our application. Detection is often possible for non-stationarity in the covariates, so we limit our discussion to covariate shift. We outline two common scenarios, changepoints and drift, below and provide algorithms to account for each scenario. See \citeauthor{barber2023} \citeyearpar{barber2023} for a discussion of nonexchangeability, including nonexchangeability from non-stationarity, such as changepoints and drift.

\subsection{Motivating example}

\textbf{Scenario 3: Sudden shift.} We begin running SafeOpt at Hospital A, but we move to Hospital B halfway through the study. By observing the ages of the people in the study, we can re-weight the calibration data for the second half of the study. 

\textbf{Scenario 4: Gradual drift.} Our training and calibration sets were collected at Hospital A, and the patients recruited and scheduled for the study earlier were from Hospital A. The patients recruited and scheduled later tended to be from Hospital B. We look at when patients are scheduled to receive treatment via SafeOpt and calculate the rate of covariate drift. We then weight each point in the calibration set such that points collected further in the past are less important. To avoid the entire calibration set being far in the past by the end of the study, we alternate between running SafeOpt and running the standard-of-care policy. The standard-of-care step is added to our calibration set. 

\subsection{Algorithmic description}

If covariate shift is detected at a changepoint (Scenario 3), we can observe unlabeled data and recalculate the weights as in Eq.~\ref{eq:w_i-covariate-shift}. After the weights have been calculated, we can create conformal predictions for $Y_{soc}$ again and SafeOpt continues (Alg.~\ref{alg:changepoint}). 

If there is gradual covariate drift (Scenario 4), we can observe the rate of drift before starting Bayesian optimization so that the weights are set appropriately. Each point in the calibration set is weighted such that points collected further in the past are less important. For example, we could learn a monotone function $g$ such that 
\begin{equation}\label{eq:drift}
   w_i = \exp(-g(t-t_{i})) \cdot P_{test}( \mathbf{x}_i)/P_{cal}( \mathbf{x}_i)
\end{equation}
where $t_{i}$ is the time of data collection for point $i$, and $t$ is the current timestep. To ensure that the calibration set does not become out of date, we alternate between taking a step of Bayesian optimization and taking an action from $\pi_{soc}$. This $\pi_{soc}$ step is added to our calibration set (Alg.~\ref{alg:drift}).

\section{Additional experimental details}\label{appx:experimental-details}
Part of our code is based on \cite{berkenkamp2023}, which uses an MIT License. The MovieLens 100k dataset \citep{harper2015} is available at https://grouplens.org/datasets/movielens/100k/ with a special license listed here: https://files.grouplens.org/datasets/movielens/ml-100k-README.txt. The chemical reaction simulator \citep{kang2019} is available at https://github.com/VlachosGroup/Fructose-HMF-Model. It also has an MIT License. 

Each experiment was run on one CPU and used about 300 megabytes. Running 10 seeds took between 4 and 20 minutes, depending on the experiment. For example, experiments with no weighting ran in a few minutes, while experiments with different policies between calibration and test took about 20 minutes. 

For the MovieLens 100k dataset, we also wanted to include variation in the data, rather than running the algorithm on the same dataset every time. Therefore, we used bootstrap sampling on the user level to get a slightly different dataset for every random seed. 

The results for every dataset or simulator were run on 10 random seeds each. Ninety-five percent confidence intervals were computed by multiplying the standard error of the mean by a critical t-value from scipy.stats.t.ppf(). 

The hyperparameter $\eta$ was set to 0.2 for all experiments. In preliminary tests, we found that this was an appropriate value for 100 timesteps. We chose $\alpha=0.1$ since that value was used in many of the \cite{zhang2024} experiments, and it was a high enough value to accommodate a couple of estimated components. $\epsilon=0.01$ since error rates in conformal prediction are usually set to some small value, and $\epsilon$ needed to be small enough that $\alpha'$ was positive.  

Because the weights and scores are calculated and stored before the looping through each new test point, not much computational overhead is added compared to standard SafeOpt with online conformal prediction. To calculate the quantiles for each new test point, we need to iterate through at most $n+1$ data points, where $n$ is the size of the calibration set.  


All code can be found at [Github link redacted]. 

\subsection{MovieLens experiments}

We use matrix factorization on the training set to describe each movie as a set of movie features and each user as a set of user features. User-item pairs from the test set are not included when creating the factorization since they would not have been observed. Each movie is a set of 20 features, and each observed user is a set of 20 features. 

We initialize the safe set with one decision $\mathbf{x}_0$ from the standard-of-care policy and its corresponding observations of $f(\mathbf{x}_0)$ and $q(\mathbf{x}_0)$. 

In contextual Gaussian processes, the context is often appended to the decision. For the MovieLens experiments, the context is appended to each decision $\mathbf{x}$. The context includes the standard of care action, followed by the user features. Each decision is a set of item features. 

The kernels are factored linear kernels that treat the standard of care action context and the user context differently: 
\begin{equation}
\begin{split}
    &k(item, item', item_{soc}, item_{soc}', user, user') = \\& k(item, item') \cdot k(item_{soc}, item_{soc}') + k(item, item') \cdot k(user, user')
\end{split}
\end{equation}
where $k(a, a') = a^\top a'$.

\subsection{Chemical reaction experiments}

As in the MovieLens experiments, we initialize the safe set with a decision and observations from the standard-of-care policy. The context only includes the standard-of-care action in this case. The kernel is as follows: 
\begin{equation}
\begin{split}
    &k(item, item', item_{soc}, item_{soc}') = k(item, item') \cdot k(item_{soc}, item_{soc}') 
\end{split}
\end{equation}
where $k(a, a') = a^\top a'$.

\subsection{Synthetic experiments}\label{appx:synthetic-data}

To create the synthetic functions $Z$ and $Y$, we sample functions from two Gaussian processes. The process for $Z$ has an RBF kernel with lengthscale 1.0 and variance 1.0. The process for $Y$ has an RBF kernel variance 1.0. There is a single context variable $c$ that affects $Y$. The lengthscale is 0.1 for $item$, the choice that SafeOpt will make, and 1.0 for $c$. 
\begin{equation}
\begin{split}
    Z& \sim GP(0, RBF(1.0, 1.0)) \\
    Y& \sim GP(0, RBF(1.2, (1.0, 0.1))) + 5\cdot c
\end{split}
\end{equation}
We seed the Gaussian processes with support points between 0 and 1 for $item$ and 0 and 5 for $c$; sample a prior; and then solve for the weights that reconstruct the sampled GP function from the support points. See synthetic\_data.py for details. 

We initialize the safe set with a decision and observations from the standard-of-care. 

The kernel we use for $q$ is similar to the MovieLens kernel, except $c$ is our 1D context variable:
\begin{equation}
\begin{split}
    &k(item, item', item_{soc}, item_{soc}', c, c') = \\& k(item, item') \cdot k(item_{soc}, item_{soc}') + k(item, item') \cdot k(c, c')
\end{split}
\end{equation}
where $k(a, a') = a^\top a'$.

\section{Additional results}\label{appx:additional-results}

Note that $\alpha=0.1$ and $\epsilon=0.01$ for all plots in this section. We create confidence bounds based on 10 random seeds. 

\subsection{Additional MovieLens experiments}\label{appx:additional-movielens}
Fig.~\ref{fig:main-results} shows the results when the training and calibration data come from the standard-of-care policy or the data collection policy when $q(\mathbf{x})$ is defined as in Eq.~\ref{eq:q}. Fig.~\ref{fig:movielens-soc-results} shows the full results for both Eq.~\ref{eq:q} and Eq.~\ref{eq:q-max} under the standard-of-care policy, and Fig.~\ref{fig:movielens-obs-results} shows the full results for both Eq.~\ref{eq:q} and Eq.~\ref{eq:q-max} under the data collection policy. Note that some results are repeated from Fig.~\ref{fig:main-results}. The results do not differ much from the results in the main text.

\begin{figure}
\centering
\includegraphics[width=\columnwidth]{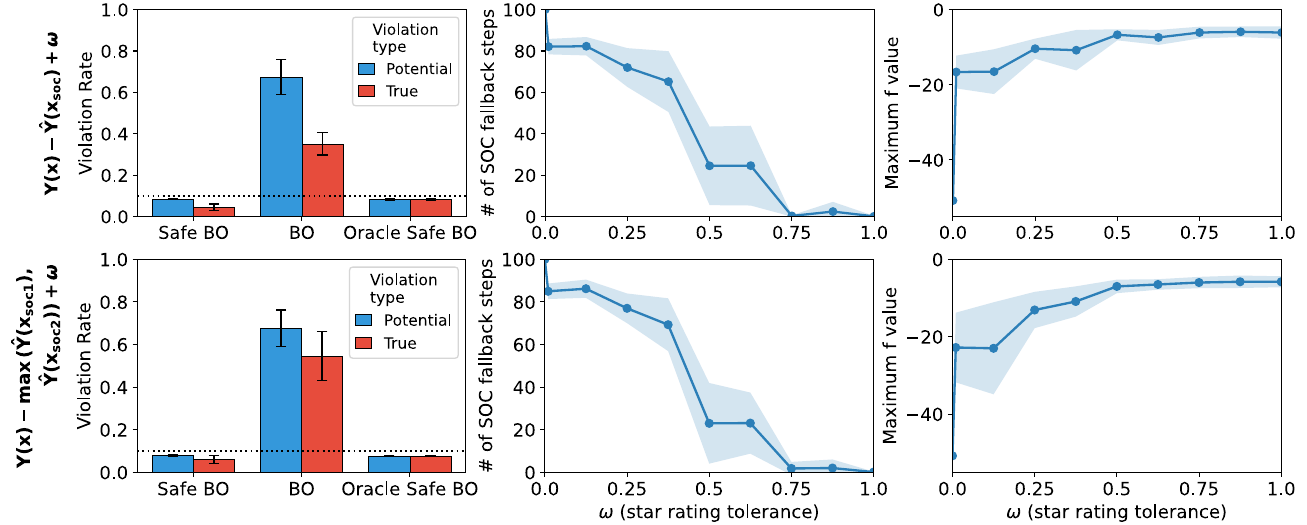} 
\caption{\textbf{MovieLens results with standard-of-care policy training and calibration.} 
}
\label{fig:movielens-soc-results}
\end{figure}
\begin{figure}
\centering
\includegraphics[width=\columnwidth]{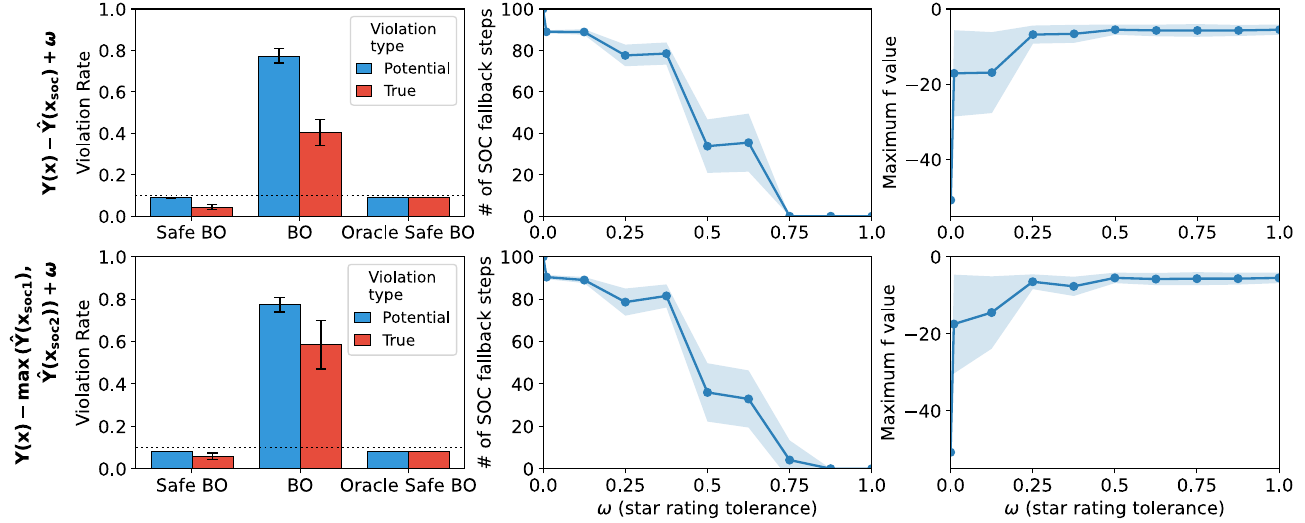} 
\caption{\textbf{MovieLens results with data collection policy training and calibration.} 
}
\label{fig:movielens-obs-results}
\end{figure}

\subsection{Additional chemical reaction experiments}\label{appx:additional-reaction}
Fig.~\ref{fig:main-results} shows the chemical reaction results under a random policy for training and calibration when $q(\mathbf{x})$ is defined as in Eq.~\ref{eq:q} and Eq.~\ref{eq:q-max}. Fig.~\ref{fig:reaction-soc-results} shows the results when the training and calibration data have been collected from the standard-of-care policy. The results do not differ much from the results in the main text.

\begin{figure}
\centering
\includegraphics[width=\columnwidth]{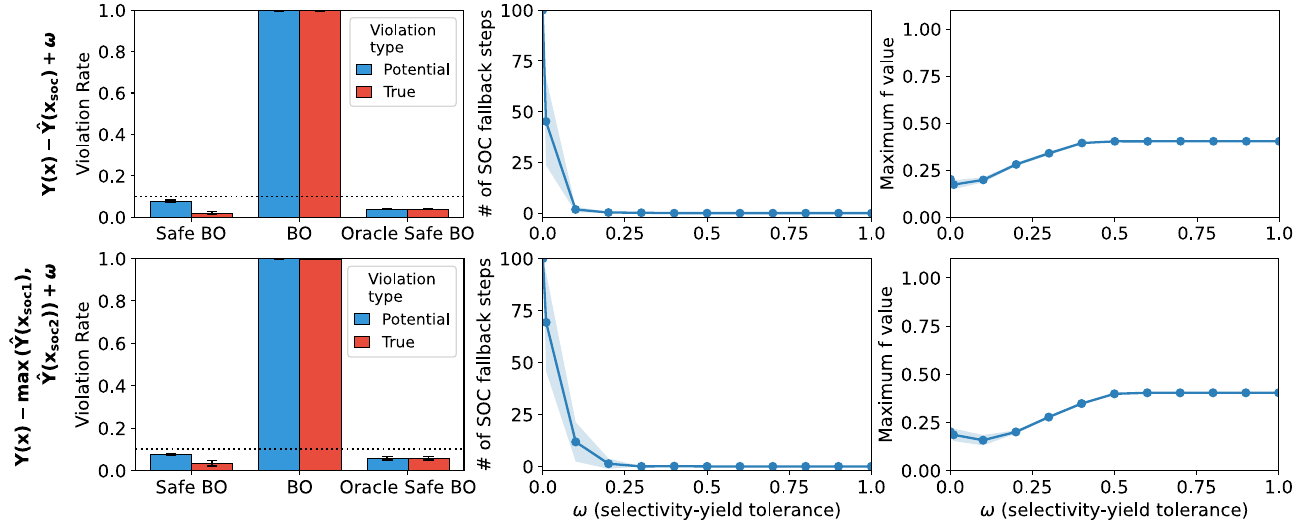} 
\caption{\textbf{Reaction results with standard-of-care policy training and calibration.} 
}
\label{fig:reaction-soc-results}
\end{figure}

\subsection{Synthetic data results}\label{appx:additional-synthetic}
We did not include the synthetic data experiments in the main text due to space constraints, but it is useful to include them in order to understand the later experiments in the non-stationarity experiments section and sensitivity analysis section. Because the non-stationarity experiments and sensitivity analysis use $q(\mathbf{x})$ from Eq.~\ref{eq:q} and use training and calibration data from the standard-of-care policy, Fig.~\ref{fig:synthetic-results} also uses these conditions. Note that for each seed, we randomly generate a new $Z$ and $Y$. 

We observe the same general trends in the synthetic results as we do in the MovieLens and reaction results. The primary difference is that the synthetic results quickly reach near-zero standard-of-care fallback steps and quickly reach the maximum $f$ value. 

\begin{figure}
\centering
\includegraphics[width=\columnwidth]{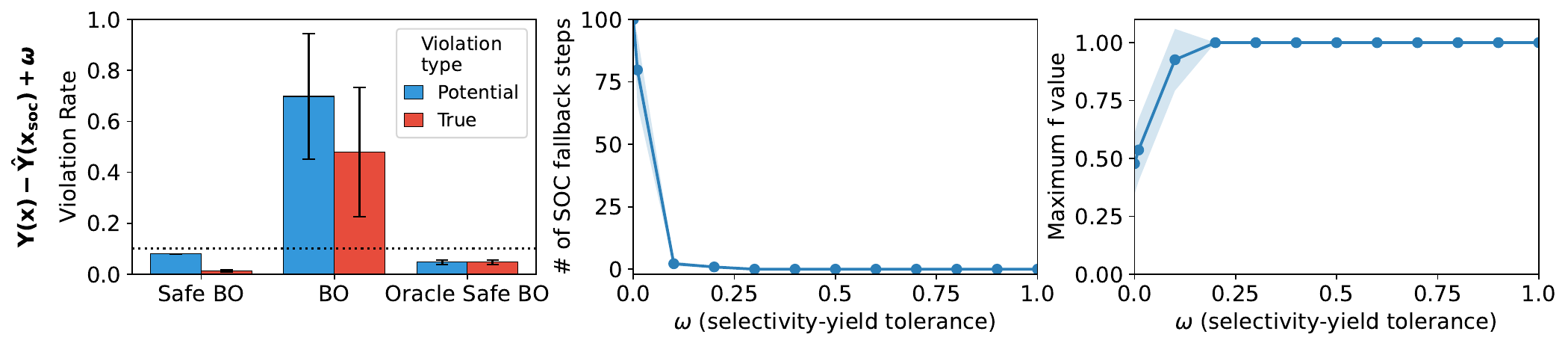} 
\caption{\textbf{Synthetic results.} 
}
\label{fig:synthetic-results}
\end{figure}

\subsection{Non-stationarity results}\label{appx:nonstationarity-results}
We add drift to the synthetic data's covariates so that the covariates transition from (0, 1) to (4, 5) at a rate proportional to $\gamma^{\Delta t}$, where $\gamma$ is a parameter plotted in Fig.~\ref{fig:drift}. 

The left-hand plot of Fig.~\ref{fig:drift} shows the violation rate when we correctly re-weight for drift according to $\gamma^{\Delta t}$. Recent observations are weighted so that they are more relevant, and past observations are less relevant. As described in Sec.~\ref{appx:nonstationarity}, we alternate between Bayesian optimization and taking a standard-of-care action for the calibration set. In the right-hand plot, we ignore drift altogether and run Alg.~\ref{alg:safe-bocp} as normal.

\begin{figure}
\centering
\includegraphics[width=\columnwidth]{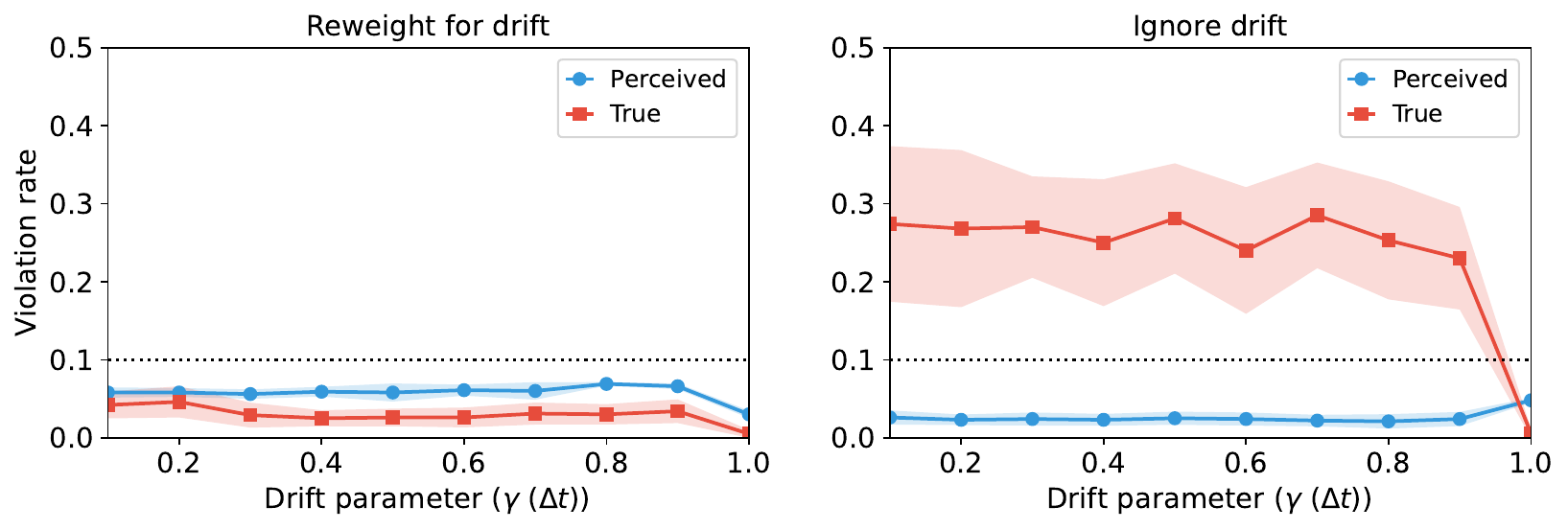} 
\caption{\textbf{Drift results.} 
}
\label{fig:drift}
\end{figure}

In Fig.~\ref{fig:changepoint} we examine the effects of changepoints. Before the changepoint, the covariates are between 0 and 1, while after the changepoint, they are between 4 and 5. We plot the violation rate versus the time of the changepoint. When we correctly re-weight for the changepoint, the violation rate stays below $\alpha$. The violation rate is slightly higher at lower timesteps because we only observe 100 points of new calibration data after the changepoint, so we have better estimates before the changepoint since we have more old calibration data.

In the right-hand plot, we ignore the changepoint. In the worst case, the changepoint occurs early. Because we are not re-weighting, conformal coverage does not hold. 
\begin{figure}
\centering
\includegraphics[width=\columnwidth]{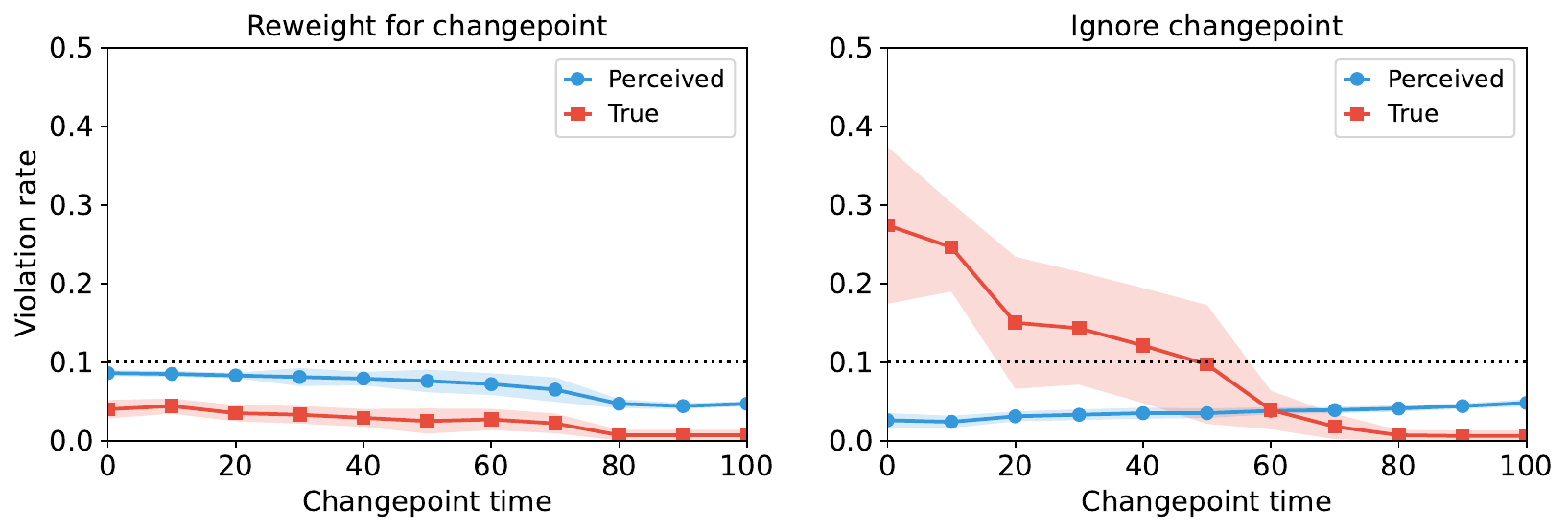} 
\caption{\textbf{Changepoint results.} 
}
\label{fig:changepoint}
\end{figure}

\subsection{Sensitivity analysis}\label{appx:sensitivity-analysis}
When we have noisy data or we have a high-bias or high-variance estimator, conformal coverage still holds. Therefore, the actual and perceived violations are still below $\alpha$. However, the number of standard-of-care fallback steps increases as the estimator worsens or as noise increases (Fig.~\ref{fig:sensitivity-analysis}). For these plots, $\omega=0.1$ since the number of fallback steps is low under default conditions, as shown in Fig.~\ref{fig:synthetic-results}.
\begin{figure}
\centering
\includegraphics[width=\columnwidth]{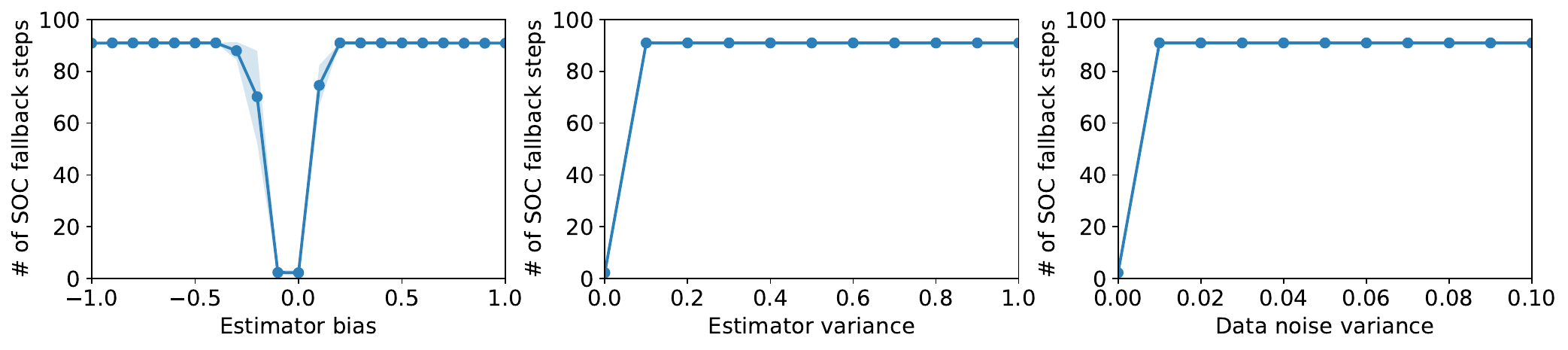} 
\caption{\textbf{Sensitivity analysis: estimator and variability.} 
}
\label{fig:sensitivity-analysis}
\end{figure}

When the weights are mis-specified, conformal coverage does not hold. In Fig.~\ref{fig:sensitivity-weights} we plot the violation rate versus the percent overlap between the training and calibration and the testing covariates. One hundred percent overlap means that both sets have covariates between 0 and 1. Zero percent overlap means that the training and calibration data have covariates between 0 and 1, while the testing data has covariates between 1 and 2. For this plot, $\omega=0.001$ since small changes to the weights have a large impact at low $\omega$ values. 
\begin{figure}
\centering
\includegraphics[width=0.5\columnwidth]{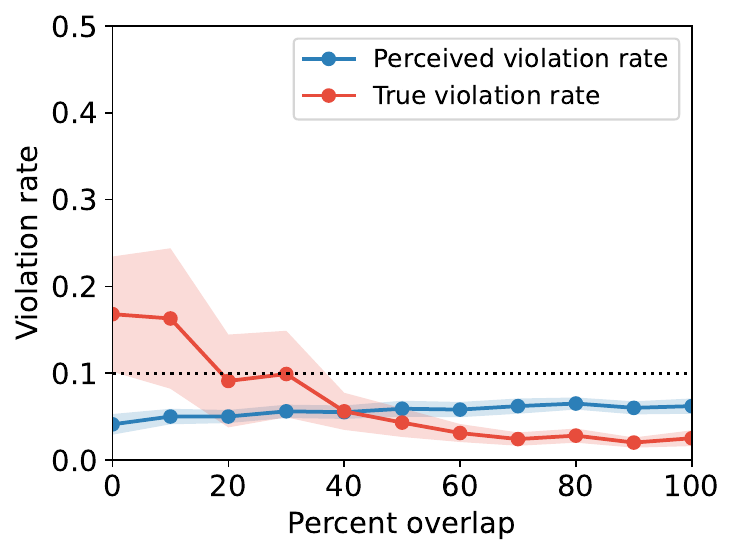} 
\caption{\textbf{Sensitivity analysis: weights.} 
}
\label{fig:sensitivity-weights}
\end{figure}

Note that different $\alpha$ values will change the maximum number of fallback steps and allowed violation rate threshold. High $\omega$ values result in fewer fallback steps in general and therefore less sensitive to noise or worse $\hat{Y}$ estimators, while low $\omega$ values result in more fallback steps. 



\end{document}